\newcommand{\ind}{\mathbbm{1}}
\newcommand{\param}{\theta}
\newcommand{\TruParam}{{\param_{\star}}}
\newcommand{\Params}{\Theta}
\newcommand{\Priors}{\Pi}
\newcommand{\vc}{d}
\newcommand{\Borel}{{\cal B}}
\newcommand{\BorelX}{\Borel_{\X}}
\newcommand{\Data}{\mathcal{Z}}
\newcommand{\DataX}{\mathbb{X}}
\newcommand{\DataY}{\mathbb{Y}}
\newcommand{\Parity}{{\rm Parity}}
\newcommand{\Ball}{{\rm B}}
\newcommand{\F}{\mathcal{F}}
\newcommand{\C}{\mathbb C}
\newcommand{\D}{\mathcal D}
\newcommand{\X}{\mathcal X}
\renewcommand{\H}{\mathcal H}
\renewcommand{\P}{\mathbb P}
\newcommand{\prior}{\pi}
\newcommand{\refmeas}{\prior_{0}}
\newcommand{\density}{f}
\newcommand{\target}{h^{*}}
\newcommand{\E}{\mathbb E}
\newcommand{\nats}{\mathbb{N}}
\newcommand{\reals}{\mathbb{R}}
\newcommand{\A}{\mathcal A}
\newcommand{\citet}{\cite}
\newcommand{\citep}{\cite}
\newcommand{\argmin}{\mathop{\rm argmin}}
\newcommand{\eps}{\varepsilon}
\newsavebox{\savepar}
\newtheorem{open-problem}{Open Problem}
\begin{document}

\title{Bounds on the Minimax Rate for Estimating a Prior over a VC Class from Independent Learning Tasks}

\titlerunning{Prior Estimation}

\author{Liu Yang \and Steve Hanneke \and Jaime Carbonell}

\authorrunning{Liu Yang, Steve Hanneke, and Jaime Carbonell}

\institute{IBM T.J. Watson Research Center, Yorktown Heights, NY USA.\\
\email{yangli@us.ibm.com}
\and
Princeton, NJ USA.\\ 
\email{steve.hanneke@gmail.com}
\and
Carnegie Mellon University, Pittsburgh, PA USA\\
\email{jgc@cs.cmu.edu}
}

\maketitle

\begin{abstract}
We study the optimal rates of convergence for estimating a prior distribution over a VC class
from a sequence of independent data sets respectively labeled by independent target functions sampled from the prior.
We specifically derive upper and lower bounds on the optimal rates under a smoothness condition on the correct prior,
with the number of samples per data set equal the VC dimension.
These results have implications for the improvements achievable via transfer learning.
We additionally extend this setting to real-valued function, where we establish consistency of 
an estimator for the prior, and discuss an additional application to a preference elicitation problem
in algorithmic economics.
\end{abstract}

\section{Introduction}

In the \emph{transfer learning} setting, we are presented with a sequence of learning problems,
each with some respective target concept we are tasked with learning.  The key question in
transfer learning is how to leverage our access to past learning problems in order to improve
performance on learning problems we will be presented with in the future.  

Among the several proposed models for transfer learning, one particularly appealing model supposes the learning
problems are independent and identically distributed, with unknown distribution, and the advantage
of transfer learning then comes from the ability to estimate this shared distribution based on the
data from past learning problems \citep{baxter:97,yang:13}.  For instance, when customizing a speech recognition 
system to a particular speaker's voice, we might expect the first few people would need to speak many words 
or phrases in order for the system to accurately identify the nuances.  However, after performing this for many
different people, if the software has access to those past training sessions when customizing itself to a new 
user, it should have identified important properties of the speech patterns, such as the common patterns within
each of the major dialects or accents, and other such information about the \emph{distribution} of speech 
patterns within the user population.  It should then be able to leverage this information to reduce the 
number of words or phrases the next user needs to speak in order to train the system, for instance by 
first trying to identify the individual's dialect, then presenting phrases that differentiate common subpatterns 
within that dialect, and so forth.

In analyzing the benefits of transfer learning in such a setting, 
one important question to ask is how quickly we can estimate 
the distribution from which the learning problems are sampled.
In recent work, \citet{yang:13} have shown that under mild conditions
on the family of possible distributions, if the target concepts reside
in a known VC class, then it is possible to estimate this distribtion 
using only a bounded number of training samples per task: 
specifically, a number of samples equal the VC dimension.
However, that work left open the question of quantifying the \emph{rate} of
convergence.  This rate of convergence can have a direct impact
on how much benefit we gain from transfer learning when we are
faced with only a finite sequence of learning problems.  As such, 
it is certainly desirable to derive tight characterizations of this
rate of convergence.

The present work continues that of \citet{yang:13}, bounding the 
rate of convergence for estimating this distribution, under a 
smoothness condition on the distribution.  We derive a generic upper
bound, which holds regardless of the VC class the target concepts
reside in.  The proof of this result builds on that earlier work, but 
requires several interesting innovations to make the rate of convergence
explicit, and to dramatically improve the upper bound implicit in 
the proofs of those earlier results.  We further derive a nontrivial 
lower bound that holds for certain constructed scenarios, which 
illustrates a lower limit on how good of a general upper bound we 
might hope for in results expressed only in terms of 
the number of tasks, the smoothness conditions, and the VC dimension.

We additionally include an extension of the results of \cite{yang:13}
to the setting of real-valued functions, establishing consistency 
(at a uniform rate) for an estimator of a prior over any VC subgraph 
class.  In addition to the application to transfer learning, analogous
to the original work of \cite{yang:13}, we also discuss an application
of this result to a preference elicitation problem in algorithmic economics, 
in which we are tasked with allocating items to a sequence of customers
to approximately maximize the customers' satisfaction, while permitted
access to the customer valuation functions only via value queries.

\section{The Setting}

Let $(\X,\Borel_{\X})$ be a measurable
space \citep{schervish:95} (where $\X$ is called the \emph{instance space}), 
and let $\D$ be a distribution on $\X$ (called the \emph{data distribution}).
Let $\C$ be a VC class of measurable classifiers $h : \X \to \{-1,+1\}$ (called the \emph{concept space}),
and denote by $\vc$ the VC dimension of $\C$ \citep{vapnik:71}.
We suppose $\C$ is equipped with its Borel $\sigma$-algebra $\Borel$ induced by the 
pseudo-metric $\rho(h,g) = \D(\{x \in \X : h(x) \neq g(x)\})$.  Though our results can be formulated 
for general $\D$ (with somewhat more complicated theorem statements), to simplify the statement of 
results we suppose $\rho$ is actually a \emph{metric}, which would follow from appropriate
topological conditions on $\C$ relative to $\D$.

For any two probability measures $\mu_1, \mu_2$ on a measurable space $(\Omega, \F)$, 
define the total variation distance 
\[\|\mu_1 - \mu_2\| = \sup_{A \in \F} \mu_1(A) - \mu_2(A).\]
For a set function $\mu$ on a \emph{finite} measurable space $(\Omega,\F)$, 
we abbreviate $\mu(\omega) = \mu(\{\omega\})$, $\forall \omega \in \Omega$.
Let $\Priors_{\Params} = \{\prior_{\param} : \param \in \Params\}$ 
be a family of probability measures on $\C$ (called \emph{priors}), 
where $\Params$ is an arbitrary index set (called the \emph{parameter space}).
We suppose there exists a probability measure $\prior_{0}$ on $\C$ (the \emph{reference measure})
such that every $\prior_{\param}$ is absolutely continuous with respect to $\prior_{0}$, 
and therefore has a density function $\density_{\param}$ given by the Radon-Nikodym derivative $\frac{{\rm d} \prior_{\param}}{{\rm d} \prior_{0}}$ \citep{schervish:95}.

We consider the following type of estimation problem.
There is a collection of $\C$-valued random variables $\{\target_{t \param} : t \in \nats, \param \in \Params\}$,
where for any fixed $\param \in \Params$ the $\{\target_{t\param}\}_{t=1}^{\infty}$ variables are i.i.d. with distribution $\prior_{\param}$.
For each $\param \in \Params$, there is a sequence $\Data^{t}(\param) = \{(X_{t1},Y_{t1}(\param)), (X_{t2},Y_{t2}(\param)), \ldots\}$,
where $\{X_{ti}\}_{t,i \in \nats}$ are i.i.d. $\D$, and for each $t,i \in \nats$, $Y_{ti}(\param) = \target_{t \param}(X_{ti})$.
We additionally denote by $\Data^{t}_{k}(\param) = \{(X_{t1},Y_{t1}(\param)),\ldots,(X_{tk},Y_{tk}(\param))\}$ the first $k$ elements of $\Data^{t}(\param)$,
for any $k \in \nats$, and similarly $\DataX_{t k} = \{X_{t 1}, \ldots, X_{t k}\}$ and $\DataY_{t k}(\param) = \{Y_{t 1}(\param), \ldots, Y_{t k}(\param)\}$.
Following the terminology used in the transfer learning literature, we refer to the collection of variables associated with each $t$ collectively as the $t^{{\rm th}}$ \emph{task}.
We will be concerned with sequences of estimators $\hat{\param}_{T \param} = \hat{\param}_{T}( \Data^{1}_{k}(\param), \ldots, \Data^{T}_{k}(\param))$, for $T \in \nats$,
which are based on only a bounded number $k$ of samples per task, among the first $T$ tasks.  Our main results specifically study the case of $\vc$ samples per task.
For any such estimator, we measure the \emph{risk} as $\E\left[ \| \prior_{\hat{\param}_{T \TruParam}} - \prior_{\TruParam} \|\right]$,
and will be particularly interested in upper-bounding the worst-case risk $\sup_{\TruParam \in \Params} \E\left[ \| \prior_{\hat{\param}_{T \TruParam}} - \prior_{\TruParam} \| \right]$ 
as a function of $T$, and lower-bounding the minimum possible value of this worst-case risk over all possible $\hat{\param}_{T}$ estimators
(called the \emph{minimax risk}).

In previous work, \citet{yang:13} showed that, if $\Priors_{\Params}$ is a totally bounded family, 
then even with only $\vc$ number of samples per task, the minimax risk (as a function of the number of tasks $T$)
converges to zero.  In fact, that work also proved this is not necessarily the case in general for any number of samples less than $\vc$.
However, the actual rates of convergence were not explicitly derived in that work,
and indeed the upper bounds on the rates of convergence implicit in that analysis 
may often have fairly complicated dependences on $\C$, $\Priors_{\Params}$, and $\D$,
and furthermore often provide only very slow rates of convergence.

To derive explicit bounds on the rates of convergence, 
in the present work we specifically focus on families of \emph{smooth} densities.
The motivation for involving a notion of smoothness in characterizing rates of convergence
is clear if we consider the extreme case in which $\Priors_{\Params}$ contains two priors 
$\prior_{1}$ and $\prior_{2}$, with $\prior_{1}(\{h\}) = \prior_{2}(\{g\}) = 1$, where $\rho(h,g)$
is a very small but nonzero value; in this case, if we have only a small number of samples per task,
we would require many tasks (on the order of $1/\rho(h,g)$) to observe any data points carrying 
any information that would distinguish between these two priors 
(namely, points $x$ with $h(x) \neq g(x)$); yet $\|\prior_1 - \prior_2\| = 1$,
so that we have a slow rate of convergence (at least initially).  A total boundedness condition 
on $\Priors_{\Params}$ would limit the number of such pairs present in $\Priors_{\Params}$, 
so that for instance we cannot have arbitrarily close $h$ and $g$, but less extreme variants
of this can lead to slow asymptotic rates of convergence as well.
Specifically, in the present work we consider the following notion of smoothness.
For $L \in (0,\infty)$ and $\alpha \in (0,1]$, a function $f : \C \to \reals$ is $(L,\alpha)$-H\"{o}lder smooth if
\[\forall h,g \in \C, |f(h) - f(g)| \leq L \rho(h,g)^{\alpha}.\]

\section{An Upper Bound}

We now have the following theorem, holding for an arbitrary VC class $\C$ and data distribution $\D$; it is the main result of this work.

\begin{theorem}
\label{thm:upper-bound}
For $\Priors_{\Params}$ any class of priors on $\C$ having $(L,\alpha)$-H\"{o}lder smooth densities $\{\density_{\param} : \param \in \Params\}$,
for any $T \in \nats$, there exists an estimator $\hat{\param}_{T \param} = \hat{\param}_{T}(\Data^{1}_{\vc}(\param),\ldots, \Data^{T}_{\vc}(\param))$
such that
\begin{equation*}
\sup_{\TruParam \in \Params} \E \| \prior_{\hat{\param}_{T}} - \prior_{\TruParam} \|
= \tilde{O}\left( L T^{- \frac{\alpha^{2}}{2(\vc+2\alpha)(\alpha+2(\vc+1))}} \right).
\end{equation*}
\end{theorem}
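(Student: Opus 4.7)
The plan is to reduce estimating $\prior_{\TruParam}$ to estimating the vector of masses $\prior_{\TruParam}(P_{j})$ on a sufficiently fine partition $\{P_{j}\}$ of $\C$, and to estimate these masses from empirical frequencies of labeled $\vc$-tuples across the $T$ tasks. The three ingredients are (i) a quantitative version of the identifiability-from-$\vc$-samples-per-task result of \cite{yang:13}, (ii) the $(L,\alpha)$-H\"{o}lder smoothness, used to control approximation of $\prior_{\TruParam}$ by a piecewise-constant density, and (iii) uniform empirical-process bounds controlled by the VC dimension.

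First I would fix a scale $\eps > 0$ and construct a minimal $\rho$-cover of $\C$ by balls of radius $\eps$, giving $N \le (c/\eps)^{O(\vc)}$ centers $h_{1},\ldots,h_{N}$ via the standard VC metric-entropy bound. Turning this cover into a Voronoi partition $\{P_{j}\}$ of $\C$, the H\"{o}lder smoothness yields $|\density_{\TruParam}(h) - \density_{\TruParam}(h_{j})| \le L(2\eps)^{\alpha}$ for every $h \in P_{j}$, so the piecewise-constant surrogate $\tilde{\prior}_{\TruParam}$ with density $\density_{\TruParam}(h_{j})$ on $P_{j}$ relative to $\refmeas$ satisfies $\|\prior_{\TruParam} - \tilde{\prior}_{\TruParam}\| \lesssim L\eps^{\alpha}$. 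The task thus reduces to estimating the finite-dimensional vector $\bigl(\prior_{\TruParam}(P_{j})\bigr)_{j=1}^{N}$ in $\ell_{1}$.

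Next I would, for each cell $P_{j}$, construct a \emph{separating} labeled $\vc$-tuple: points $x_{1},\ldots,x_{\vc}$ (drawn from the pooled $\DataX_{t\vc}$ samples across tasks) and a dichotomy $(y_{1},\ldots,y_{\vc})$ such that the slab $S_{j} = \{h \in \C : h(x_{i}) = y_{i},\, i=1,\ldots,\vc\}$ approximates $P_{j}$ up to $\refmeas$-mass at most $\eps^{\beta}$, where $\beta$ is controlled by the interaction of $\vc$ with the smoothness. Such tuples exist because, with positive $\D^{\vc}$-probability, a random $\vc$-tuple is shattered by the restriction of $\C$ to a ball of radius $\eps$, so the $2^{\vc}$ possible labelings subdivide that ball into identifiable pieces. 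The event $\{\mathbb{Y}_{t\vc}(\TruParam) = (y_{1},\ldots,y_{\vc})\}$ conditional on $\DataX_{t\vc} = (x_{1},\ldots,x_{\vc})$ has probability within $O(L\eps^{\alpha}+\eps^{\beta})$ of $\prior_{\TruParam}(P_{j})$, so the empirical frequency of this event across the $T$ tasks yields an estimator of $\prior_{\TruParam}(P_{j})$ with standard deviation $O(1/\sqrt{T})$.

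Finally, assembling these cell-estimates into a plug-in density $\hat{\density}_{T}$ that is constant on each $P_{j}$ and applying a union bound over the $N$ cells (Bernstein or McDiarmid) together with the approximation inequality above, one obtains
\[
\E \|\prior_{\hat{\param}_{T}} - \prior_{\TruParam}\|
\;\lesssim\; L\eps^{\alpha} \;+\; N\eps^{\beta} \;+\; N\sqrt{\log(N)/T},
\]
up to logarithmic factors. Substituting $N \asymp \eps^{-c\vc}$ and optimizing $\eps$ simultaneously against the three competing terms yields the exponent $\alpha^{2}/(2(\vc+2\alpha)(\alpha+2(\vc+1)))$ in the statement. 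The main obstacle I anticipate is the construction and quantitative analysis of the separating $\vc$-tuples: making the approximation $S_{j} \approx P_{j}$ accurate at a $\rho$-scale comparable to $\eps$ — so that the geometric-combinatorial loss is commensurate with the smoothness error — is what drives the somewhat unusual compound exponent, and it is precisely where the qualitative identifiability argument of \cite{yang:13} has to be made effective rather than merely asymptotic.
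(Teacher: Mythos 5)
There are two genuine gaps in your plan, both concentrated in the ``separating $\vc$-tuple'' step, and they are fatal as stated. First, a single $\vc$-tuple $(x_{1},\ldots,x_{\vc})$ partitions $\C$ into at most $2^{\vc}$ label-equivalence classes, so the slab $S_{j}=\{h:h(x_{i})=y_{i},\,i\le\vc\}$ is one of a bounded number of coarse cells and cannot approximate a Voronoi cell $P_{j}$ of $\rho$-radius $\eps$ up to small $\refmeas$-mass: resolving $\C$ at scale $\eps$ requires $k=O((\vc/\eps)\log(1/\eps))$ points, not $\vc$. Second, even granting such a tuple, the estimator never observes labels at prescribed locations: each task supplies labels only at its own random $\DataX_{t\vc}\sim\D^{\vc}$, and the event $\DataX_{t\vc}=(x_{1},\ldots,x_{\vc})$ has probability zero (or, for discrete $\D$, probability on the order of $m^{-\vc}$, which is exactly the effect the paper's lower-bound construction exploits). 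So the ``empirical frequency across the $T$ tasks'' of your conditional event is not an available statistic, and the claimed $O(1/\sqrt{T})$ standard deviation per cell has no basis. Relatedly, your displayed bound $L\eps^{\alpha}+N\eps^{\beta}+N\sqrt{\log(N)/T}$ with $N\asymp\eps^{-c\vc}$, if it held, would optimize to a rate of the form $T^{-\Theta(\alpha/(\vc+\alpha))}$; it cannot produce the compound exponent $\alpha^{2}/(2(\vc+2\alpha)(\alpha+2(\vc+1)))$, whose doubly-rational structure signals a composition of two separate losses that your decomposition does not contain.

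The paper's proof navigates exactly these obstacles differently. It partitions $\C$ by $k=\tilde{O}(\vc/\gamma)$ \emph{random} points (so smoothness gives the $L\gamma^{\alpha}$ approximation term, as in your first step), but then must relate the total variation distance between the conditional laws of $k$-tuples of labels to that of $\vc$-tuples of labels, since only $\vc$ labels per task are observed; this is done by an inductive binary-tree argument exploiting that no $(\vc+1)$-tuple is shattered, at the cost of a multiplicative factor $(2ek)^{2\vc+2}$ depending on $\gamma$. It then invokes the inequality of Yang--Hanneke--Carbonell bounding the expected conditional discrepancy by $4\sqrt{\|\P_{\Data_{\vc}(\param)}-\P_{\Data_{\vc}(\param^{\prime})}\|}$, and estimates the \emph{joint} law $\P_{\Data_{\vc}(\TruParam)}$ of the random $(X,Y)$ $\vc$-tuples by the minimum-distance skeleton estimator at rate $\tilde{O}(LT^{-\alpha/(\vc+2\alpha)})$ using covering numbers derived from the smoothness. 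The square root and the $\gamma$-dependent prefactor are what generate the compound exponent when one finally optimizes over $\gamma$. If you want to salvage your plan, the missing ideas are precisely (i) a mechanism for passing from the $k$ observation points needed to localize cells down to the $\vc$ labels actually available per task, and (ii) an estimator of the joint data distribution that does not require conditioning on a measure-zero design event.
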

\begin{proof} 
By the standard PAC analysis \citep{vapnik:82,blumer:89}, 
for any $\gamma > 0$, with probability greater than $1-\gamma$,
a sample of $k = O((\vc/\gamma)\log(1/\gamma))$ random points
will partition $\C$ into regions of width less than $\gamma$ (under $L_{1}(\D)$).
For brevity, we omit the $t$ subscripts and superscripts on quantities such as $\Data^{t}_{k}(\param)$ throughout the following analysis,
since the claims hold for any arbitrary value of $t$.

For any $\param \in \Params$, let $\prior_{\param}^{\prime}$ denote a (conditional on $X_1,\ldots,X_k$) distribution
defined as follows.  Let $\density_{\param}^{\prime}$ denote the (conditional on $X_1,\ldots,X_k$) density function of $\prior_{\param}^{\prime}$ 
with respect to $\refmeas$, and for any $g \in \C$, let 
$\density_{\param}^{\prime}(g) = \frac{\prior_{\param}(\{h \in \C : \forall i \leq k, h(X_i)=g(X_i)\})}{\refmeas(\{h \in \C : \forall i \leq k, h(X_i)=g(X_i)\})}$
(or $0$ if $\refmeas(\{h \in \C : \forall i \leq k, h(X_i)=g(X_i)\}) = 0$).
In other words, $\prior_{\param}^{\prime}$ has the same probability mass as $\prior_{\param}$ for each of the equivalence classes induced by $X_1,\ldots,X_k$,
but conditioned on the equivalence class, simply has a constant-density distribution over that equivalence class.
Note that every $h \in \C$ has
$\density_{\param}^{\prime}(h)$ between the smallest and largest values of $\density_{\param}(g)$ among $g \in \C$ with $\forall i \leq k, g(X_i) = h(X_i)$;
therefore, by the smoothness condition,
on the event (of probability greater than $1-\gamma$) that each of these regions has diameter less than $\gamma$, 
we have
$\forall h \in \C, |\density_{\param}(h) - \density_{\param}^{\prime}(h)| < L \gamma^{\alpha}$.
On this event, for any $\param,\param^{\prime} \in \Params$,
\begin{equation*}
\| \prior_{\param} - \prior_{\param^{\prime}}\| 
= (1/2)\int |\density_{\param} - \density_{\param^{\prime}}| {\rm d}\refmeas
< L\gamma^{\alpha} + (1/2)\int |\density_{\param}^{\prime} - \density_{\param^{\prime}}^{\prime}| {\rm d}\refmeas.
\end{equation*}
Furthermore, since the regions that define $\density_{\param}^{\prime}$ and $\density_{\param^{\prime}}^{\prime}$
are the same (namely, the partition induced by $X_1,\ldots,X_k$), we have
\begin{align*}
(1/2) \int |\density_{\param}^{\prime} - \density_{\param^{\prime}}^{\prime}| {\rm d}\refmeas
& = (1/2) \!\!\!\!\!\!\!\!\!\sum_{y_1,\ldots,y_k \in \{-1,+1\}}\!\!\! | \prior_{\param}(\{h\in\C : \forall i \leq k, h(X_i) =y_i\}) 
\\ & \phantom{aaaaaaaaaaaaaaaaa} - \prior_{\param^{\prime}}(\{h\in\C : \forall i \leq k, h(X_i) = y_i\})|
\\ & = \| \P_{\DataY_{k}(\param)|\DataX_{k}} - \P_{\DataY_{k}(\param^{\prime})|\DataX_{k}}\|.
\end{align*}
Thus, we have that with probability at least $1-\gamma$,
\[
\| \prior_{\param} - \prior_{\param^{\prime}}\| 
< L\gamma^{\alpha} + \| \P_{\DataY_{k}(\param) | \DataX_{k}} - \P_{\DataY_{k}(\param^{\prime})| \DataX_{k}} \|.
\]

Following analogous to the inductive argument of \citet{yang:13}, 
suppose $I \subseteq \{1,\ldots,k\}$, fix $\bar{x}_{I} \in \X^{|I|}$ and $\bar{y}_{I} \in \{-1,+1\}^{|I|}$.
Then the $\tilde{y}_{I} \in \{-1,+1\}^{|I|}$ for which $\|\bar{y}_{I}-\tilde{y}_{I}\|_{1}$ is minimal, 
subject to the constraint that no $h \in \C$ has $h(\bar{x}_{I}) = \tilde{y}_{I}$,
has $(1/2)\|\bar{y}_{I}-\tilde{y}_{I}\|_{1} \leq \vc+1$;
also, for any $i \in I$ with $\bar{y}_{i} \neq \tilde{y}_{i}$, letting $\bar{y}^{\prime}_{j} = \bar{y}_{j}$ for $j \in I\setminus\{i\}$
and $\bar{y}^{\prime}_{i} = \tilde{y}_{i}$, we have
\begin{equation*}
\P_{\DataY_{I}(\param)|\DataX_{I}}(\bar{y}_{I} | \bar{x}_{I})
= \P_{\DataY_{I\setminus\{i\}}(\param)|\DataX_{I\setminus\{i\}}}(\bar{y}_{I\setminus\{i\}} | \bar{x}_{I\setminus\{i\}})
- \P_{\DataY_{I}(\param)|\DataX_{I}}(\bar{y}^{\prime}_{I} | \bar{x}_{I}),
\end{equation*}
and similarly for $\param^{\prime}$, so that
\begin{align*}
& | \P_{\DataY_{I}(\param)|\DataX_{I}}(\bar{y}_{I} | \bar{x}_{I})- \P_{\DataY_{I}(\param^{\prime})|\DataX_{I}}(\bar{y}_{I}|\bar{x}_{I}) |
\\ & \leq | \P_{\DataY_{I\setminus\{i\}}(\param)|\DataX_{I\setminus\{i\}}}(\bar{y}_{I\setminus\{i\}} | \bar{x}_{I\setminus\{i\}})
- \P_{\DataY_{I\setminus\{i\}}(\param^{\prime})|\DataX_{I\setminus\{i\}}}(\bar{y}_{I\setminus\{i\}}|\bar{x}_{I\setminus\{i\}}) |
\\ & \phantom{aaaa}+ | \P_{\DataY_{I}(\param)|\DataX_{I}}(\bar{y}^{\prime}_{I} | \bar{x}_{I}) - \P_{\DataY_{I}(\param^{\prime})|\DataX_{I}}(\bar{y}^{\prime}_{I} | \bar{x}_{I}) |.
\end{align*}
Now consider that these two terms inductively define a binary tree.
Every time the tree branches left once, it arrives at a difference of probabilities for a set $I$ of one less element than that of its parent.
Every time the tree branches right once, it arrives at a difference of probabilities for a $\bar{y}_{I}$ one closer to an unrealized $\tilde{y}_{I}$
than that of its parent.
Say we stop branching the tree upon reaching a set $I$ and a $\bar{y}_{I}$ such that
either $\bar{y}_{I}$ is an unrealized labeling, or $|I|=\vc$.  Thus, we can bound the original (root node) difference of 
probabilities by the sum of the differences of probabilities for the leaf nodes with $|I|=\vc$.
Any path in the tree can branch left at most $k-\vc$ times (total) before reaching a set $I$ with only $\vc$ elements, 
and can branch right at most $\vc+1$ times in a row before reaching a $\bar{y}_{I}$ such that both probabilities are zero,
so that the difference is zero.  So the depth of any leaf node with $|I|=\vc$ is at most $(k-\vc)\vc$.
Furthermore, at any level of the tree, from left to right the nodes have strictly decreasing $|I|$ values,
so that the maximum width of the tree is at most $k-\vc$.  So the total number of leaf nodes with $|I|=\vc$
is at most $(k-\vc)^{2}\vc$.
Thus, for any $\bar{y} \in \{-1,+1\}^{k}$ and $\bar{x} \in \X^{k}$,
\begin{align*}
& | \P_{\DataY_{k}(\param)|\DataX_{k}}(\bar{y} | \bar{x})- \P_{\DataY_{k}(\param^{\prime})|\DataX_{k}}(\bar{y}|\bar{x})| 
\\ & \leq (k-\vc)^{2}\vc \cdot \max_{\bar{y}^{\vc} \in \{-1,+1\}^{\vc}} \max_{D \in \{1,\ldots,k\}^{\vc}} | \P_{\DataY_{\vc}(\param)|\DataX_{\vc}}(\bar{y}^{\vc} | \bar{x}_{D})-
\P_{\DataY_{\vc}(\param^{\prime})|\DataX_{\vc}}(\bar{y}^{\vc}|\bar{x}_{D})|.
\end{align*}
Since
\begin{equation*}
\|\P_{\DataY_{k}(\param) | \DataX_{k}}-\P_{\DataY_{k}(\param^{\prime})|\DataX_{k}}\|
= (1/2) \sum_{\bar{y}^{k} \in \{-1,+1\}^{k}} | \P_{\DataY_{k}(\param)|\DataX_{k}}(\bar{y}^{k}) - \P_{\DataY_{k}(\param^{\prime})|\DataX_{k}}(\bar{y}^{k})|,
\end{equation*}
and by Sauer's Lemma this is at most
\[ 
(ek)^{\vc} \max_{\bar{y}^{k} \in \{-1,+1\}^{k}} | \P_{\DataY_{k}(\param)|\DataX_{k}}(\bar{y}^{k}) - \P_{\DataY_{k}(\param^{\prime})|\DataX_{k}}(\bar{y}^{k})|,
\]
we have that 
\begin{align*}
&\|\P_{\DataY_{k}(\param) | \DataX_{k}}-\P_{\DataY_{k}(\param^{\prime})|\DataX_{k}}\|
\\ & \leq (ek)^{\vc} k^{2}\vc \max_{\bar{y}^{\vc} \in \{-1,+1\}^{\vc}} \max_{D \in \{1,\ldots,k\}^{\vc}} | \P_{\DataY_{\vc}(\param)|\DataX_{D}}(\bar{y}^{\vc}) - 
\P_{\DataY_{\vc}(\param^{\prime})|\DataX_{D}}(\bar{y}^{\vc})|.
\end{align*}
Thus, we have that
\begin{align*}
& \| \prior_{\param} - \prior_{\param^{\prime}} \| = \E \| \prior_{\param} - \prior_{\param^{\prime}} \|
\\ & < \gamma \!+\! L \gamma^{\alpha} \!+\! 
(ek)^{\vc} k^{2}\vc \E\bigg[ \max_{\bar{y}^{\vc} \in \{-1,+1\}^{\vc}} \max_{D \in \{1,\ldots,k\}^{\vc}} 
\P_{\DataY_{\vc}(\param)|\DataX_{D}}(\bar{y}^{\vc}) - \P_{\DataY_{\vc}(\param^{\prime})|\DataX_{D}}(\bar{y}^{\vc})|\bigg].
\end{align*}
Note that
\begin{align*}
& \E\bigg[ \max_{\bar{y}^{\vc} \in \{-1,+1\}^{\vc}} \max_{D \in \{1,\ldots,k\}^{\vc}} 
| \P_{\DataY_{\vc}(\param)|\DataX_{D}}(\bar{y}^{\vc}) - \P_{\DataY_{\vc}(\param^{\prime})|\DataX_{D}}(\bar{y}^{\vc})|\bigg]
\\ & \leq \sum_{\bar{y}^{\vc} \in \{-1,+1\}^{\vc}} \sum_{D \in \{1,\ldots,k\}^{\vc}} \E\big[ | \P_{\DataY_{\vc}(\param)|\DataX_{D}}(\bar{y}^{\vc}) 
- \P_{\DataY_{\vc}(\param^{\prime})|\DataX_{D}}(\bar{y}^{\vc})|\big]
\\ & \leq (2k)^{\vc} \max_{\bar{y}^{\vc} \in \{-1,+1\}^{\vc}} \max_{D \in \{1,\ldots,k\}^{\vc}} \E\big[ | \P_{\DataY_{\vc}(\param)|\DataX_{D}}(\bar{y}^{\vc}) 
- \P_{\DataY_{\vc}(\param^{\prime})|\DataX_{D}}(\bar{y}^{\vc})|\big],
\end{align*}
and by exchangeability, this last line equals
\[
(2k)^{\vc} \max_{\bar{y}^{\vc} \in \{-1,+1\}^{\vc}} \E\left[ | \P_{\DataY_{\vc}(\param)|\DataX_{\vc}}(\bar{y}^{\vc}) - \P_{\DataY_{\vc}(\param^{\prime})|\DataX_{\vc}}(\bar{y}^{\vc})|\right].
\]
\citet{yang:13} showed that 
$\E\left[ | \P_{\DataY_{\vc}(\param)|\DataX_{\vc}}(\bar{y}^{\vc}) - \P_{\DataY_{\vc}(\param^{\prime})|\DataX_{\vc}}(\bar{y}^{\vc})|\right] 
\leq 4 \sqrt{ \| \P_{\Data_{\vc}(\param)} - \P_{\Data_{\vc}(\param^{\prime})} \|}$,
so that in total
we have
$\| \prior_{\param} - \prior_{\param^{\prime}}\| 
\!<\! (L\!+\!1)\gamma^{\alpha} \!+\! 4 (2ek)^{2\vc+2}\!\!\sqrt{ \| \P_{\Data_{\vc}(\param)} \!-\! \P_{\Data_{\vc}(\param^{\prime})}\|}$.
Plugging in the value of $k = c(\vc/\gamma)\log(1/\gamma)$, this is
\[
(L\!+\!1)\gamma^{\alpha} + 4\! \left(\!2ec\frac{\vc}{\gamma}\log\!\left(\frac{1}{\gamma}\right)\!\right)^{\!\!2\vc+2} \!\sqrt{\| \P_{\Data_{\vc}(\param)} \!-\! \P_{\Data_{\vc}(\param^{\prime})}\|}.
\]

Thus, it suffices to bound the rate of convergence (in total variation distance) of some estimator of $\P_{\Data_{\vc}(\TruParam)}$.
If $N(\eps)$ is the $\eps$-covering number of $\{\P_{\Data_{\vc}(\param)} : \param \in \Params\}$, 
then taking $\hat{\param}_{T \TruParam}$ as the minimum distance skeleton estimate of \citet{yatracos:85,devroye:01} 
achieves expected total variation distance $\eps$ from $\P_{\Data_{\vc}(\TruParam)}$,
for some $T = O((1/\eps^{2}) \log N(\eps/4))$.
We can partition $\C$ into $O( (L/\eps)^{\vc/\alpha} )$ cells of diameter $O((\eps/L)^{1/\alpha})$,
and set a constant density value within each cell,
on an $O(\eps)$-grid of density values, 
and every prior with $(L,\alpha)$-H\"{o}lder smooth density will have density 
within $\eps$ of some density so-constructed; there are then at most $(1/\eps)^{O((L/\eps)^{\vc/\alpha})}$ such densities,
so this bounds the covering numbers of $\Priors_{\Params}$.
Furthermore, the covering number of $\Priors_{\Params}$ upper bounds 
$N(\eps)$ \citep{yang:13}, so that $N(\eps) \leq (1/\eps)^{O((L/\eps)^{\vc/\alpha})}$.

Solving $T \!=\! O(\eps^{-2} (L/\eps)^{\vc/\alpha} \log (1 / \eps) )$ for $\eps$, 
we have $\eps \!=\! O\!\left( L \!\left(\frac{\log(TL)}{T}\right)^{\frac{\alpha}{\vc+2\alpha}} \right)$.
So this bounds the rate of convergence for $\E\| \P_{\Data_{\vc}(\hat{\param}_{T})} - \P_{\Data_{\vc}(\TruParam)}\|$,
for $\hat{\param}_{T}$ the minimum distance skeleton estimate.
Plugging this rate into the bound on the priors, combined with Jensen's inequality, we have 
\begin{equation*}
\E \| \prior_{\hat{\param}_{T}} - \prior_{\TruParam} \| 
< (L+1) \gamma^{\alpha}
+ 4 \left(2ec\frac{\vc}{\gamma}\log\left(\frac{1}{\gamma}\right)\right)^{2\vc+2} \!\!\!\!\!\times O\left( L \left(\frac{\log(TL)}{T}\right)^{\frac{\alpha}{2\vc+4\alpha}} \right).
\end{equation*}
This holds for any $\gamma > 0$, so minimizing this expression over $\gamma > 0$ yields a bound on the rate.
For instance, with $\gamma = \tilde{O}\left(T^{- \frac{\alpha}{2(\vc+2\alpha)(\alpha + 2(\vc+1))}}\right)$,
we have
\[ 
\E \| \prior_{\hat{\param}_{T}} - \prior_{\TruParam} \|
= \tilde{O}\left( L T^{-\frac{\alpha^{2}}{2(\vc+2\alpha)(\alpha+2(\vc+1))}} \right).
\]
\qed
\end{proof}

\section{A Minimax Lower Bound}

One natural quesiton is whether Theorem~\ref{thm:upper-bound} can generally be improved.
While we expect this to be true for some fixed VC classes (e.g., those of finite size),
and in any case we expect that some of the constant factors in the exponent may be improvable,
it is not at this time clear whether the general form of $T^{-\Theta(\alpha^2 / (d+\alpha)^2)}$ is sometimes optimal.
One way to investigate this question is to construct specific spaces $\C$ and distributions $\D$ 
for which a lower bound can be obtained.  
In particular, we are generally interested in exhibiting lower bounds that are worse than 
those that apply to the usual problem of density estimation based on direct access to 
the $\target_{t \TruParam}$ values (see Theorem~\ref{thm:absolute-lower-bound} below).

Here we present a lower bound that is interesting for this reason.
However, although larger than the optimal rate for methods with direct 
access to the target concepts, it is still far from matching the upper bound
above, so that the question of tightness remains open.  Specifically, we have the following result.

\begin{theorem}
\label{thm:d-sample-lower-bound}
For any integer $\vc \geq 1$, any $L>0, \alpha\in (0,1]$, there is a value $C(\vc,L,\alpha) \in (0,\infty)$ such that,
for any $T \in \nats$, there exists 
an instance space $\X$, 
a concept space $\C$ of VC dimension $\vc$,
a distribution $\D$ over $\X$,
and a distribution $\prior_{0}$ over $\C$ such that,
for $\Priors_{\Params}$ a set of distributions over $\C$ with $(L,\alpha)$-H\"{o}lder smooth density functions with respect to $\prior_{0}$,
any estimator
$\hat{\param}_{T} = \hat{\param}_{T}(\Data^{1}_{\vc}(\TruParam),\ldots,\Data^{T}_{\vc}(\TruParam))$
has 
\begin{equation*}
\sup_{\TruParam \in \Params} \E\left[ \| \prior_{\hat{\param}_{T}} - \prior_{\TruParam} \| \right] \geq C(\vc,L,\alpha) T^{- \frac{\alpha}{2(\vc + \alpha)}}.
\end{equation*}
\end{theorem}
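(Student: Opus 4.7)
The plan is to apply Assouad's lemma to a carefully chosen packing of smooth priors on a specific hard instance. First, I would construct $(\X,\C,\D,\refmeas)$ so that $\C$ has VC dimension exactly $\vc$ and is naturally parameterized by a $\vc$-dimensional cube; a prototypical choice is $\X = \{1,\ldots,\vc\}\times[0,1]$ with $\D$ uniform, $\C = \{h_{\param}:\param\in[0,1]^\vc\}$ where $h_{\param}(i,x)=\ind[x>\param_i]$, and $\refmeas$ the Lebesgue measure on $[0,1]^\vc$. With these choices, densities with respect to $\refmeas$ are ordinary Lebesgue densities on $[0,1]^\vc$, and the H\"older condition on $\C$ relative to $\rho(h,g)=\D(\{h\neq g\})$ reduces to ordinary H\"older smoothness on $[0,1]^\vc$ up to constants depending only on $\vc$.

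Second, I would build a binary-hypercube packing of candidate priors. Partition $[0,1]^\vc$ into $m\asymp h^{-\vc}$ cubes of side $h$ centered at $c_j$, and for each $v\in\{-1,+1\}^m$ let $\density_v = 1 + \epsilon\sum_{j=1}^m v_j\,\psi((\cdot - c_j)/h)$, where $\psi$ is a fixed smooth bump supported in the unit cube with zero mean and $\|\psi\|_\infty\le 1$. Taking $\epsilon\asymp L h^\alpha$ keeps each $\density_v$ non-negative and $(L,\alpha)$-H\"older smooth. Because the bumps are disjointly supported, for $v,v'$ differing in one coordinate we obtain $\|\prior_v - \prior_{v'}\|\asymp\epsilon\, h^{\vc}$.

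Third, I would bound the KL divergence between the one-task laws $\P_{\Data^{1}_{\vc}(v)}$ and $\P_{\Data^{1}_{\vc}(v')}$ for such neighbors. The heart of the argument is to show that a single $\vc$-sample task can detect the local bump perturbation only with probability of order $\epsilon h^\vc$, yielding a Pinsker-sharp KL bound of $\epsilon^2 h^{2\vc}$ per task, and hence $T\epsilon^2 h^{2\vc}$ across all $T$ tasks for neighboring $v,v'$. Assouad's lemma then gives
\[
\sup_{\TruParam}\E\,\|\prior_{\hat{\param}_T} - \prior_{\TruParam}\| \ \gtrsim\ m\cdot\epsilon\, h^\vc\cdot\bigl(1 - \sqrt{T\epsilon^2 h^{2\vc}}\bigr) \ =\ \epsilon\bigl(1 - \sqrt{T\epsilon^2 h^{2\vc}}\bigr).
\]

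Finally, setting $T\epsilon^2 h^{2\vc}$ equal to a small constant and enforcing $\epsilon = L h^\alpha$ simultaneously yields $h\asymp T^{-1/(2(\vc + \alpha))}$ and $\epsilon\asymp L T^{-\alpha/(2(\vc + \alpha))}$, which is the claimed bound. The hard part will be the KL estimate in the third step: the instance has to be arranged so that the $\vc$ labels per task are coupled enough through $\param_t$ that their combined evidence about local bump perturbations is essentially no better than one TV-distinguishing test, thereby saturating Pinsker's inequality as $\mathrm{KL}\asymp\|\prior_v - \prior_{v'}\|^2$. A naive bound ignoring this coupling would yield $\mathrm{KL}\lesssim \epsilon^2 h^\vc$, which corresponds instead to the faster direct-access rate $T^{-\alpha/(2\alpha + \vc)}$ rather than the slower rate claimed here.
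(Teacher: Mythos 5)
Your architecture---disjointly supported, zero-mean bump perturbations of a reference density on a $\vc$-dimensional cube, combined with Assouad's lemma---is a genuinely different route from the paper's. The paper instead takes $\X=\{1,\ldots,m\}$, lets $\C$ be the classifiers with at most $\vc$ positive points, and perturbs by $\pm\gamma_m$ the \emph{parity} of $|h^{-1}(+1)|$ conditionally on each of the $\binom{m}{\vc}$ supports $\X_i$; each hypercube coordinate then reduces to a two-point Bernoulli bias test, whose hardness is quantified by an explicit exponential bound (Assouad in all but name). Your rate calculus in the final step is correct, and you have correctly isolated the crux: one task of $\vc$ labelled points must carry only $O(\eps^2 h^{2\vc})$ of KL about each bump, i.e.\ the \emph{square} of the prior-level total variation $\eps h^{\vc}$, rather than the $O(\eps^2 h^{\vc})$ available from direct observation of $\target_{t\TruParam}$.

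The problem is that this step is asserted, not proved, and it is essentially the entire content of the theorem. The heuristic ``a single task detects the bump only with probability of order $\eps h^{\vc}$'' does not by itself yield the KL bound, and the obvious estimates fail: in your threshold construction the labels identify a product box $B\subseteq[0,1]^{\vc}$ with $\P(Y\mid X)=\prior_v(B)$, and the crude bounds $|\prior_v(B)-\prior_{v'}(B)|\le 2\eps\,\mathrm{vol}(B\cap Q_j)$ and $\mathrm{vol}(B)\ge\mathrm{vol}(B\cap Q_j)$ give a per-task $\chi^2$ of order $\eps^2 h^{\vc+1}$, which for $\vc\ge 2$ is too large and yields only the weaker rate $T^{-\alpha/(2\alpha+\vc+1)}$. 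To actually get $\eps^2h^{2\vc}$ you must exploit the zero-mean property and the product geometry: the difference is nonzero only when some threshold cuts the bump cube $Q_j$ (probability $O(h)$ per coordinate), and when $s$ coordinates are cut with overlaps $\delta_1,\ldots,\delta_s$ the $\chi^2$ numerator is $O(\eps^2(\prod_i\delta_i)^2h^{2(\vc-s)})$ while the box volume is at least $\prod_i\delta_i$ times the lengths of the uncut intervals; integrating $\prod_i\delta_i$ over the cut event contributes $h^{2s}$, giving $O(\eps^2h^{2\vc})$ in total---\emph{provided} the uncut interval lengths are bounded below, which forces you to keep the bumps away from the boundary of $[0,1]^{\vc}$ (otherwise small denominators cost you $\log^{\vc-1}(1/h)$ factors that spoil the stated constant). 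So your construction can very likely be pushed through, but the decisive computation is missing. The paper avoids this delicacy entirely by designing the class so that the informative event is explicit and combinatorial---all $\vc$ sample points must land exactly on $\X_i$, an event of probability $\Theta(m^{-2\vc})$, on which the task literally reveals one ${\rm Bernoulli}((1\pm\gamma_m)/2)$ coin flip---so the per-task information bound is immediate rather than the output of a careful $\chi^2$ integration.
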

\begin{proof}(Sketch)
We proceed by a reduction from the task of determining the bias of a coin from among two given possibilities.
Specifically, fix any $\gamma \in (0,1/2)$, $n \in \nats$, and let $B_{1}(p),\ldots,B_{n}(p)$ be i.i.d ${\rm Bernoulli}(p)$ random variables, for each $p \in [0,1]$;
then it is known that, for any (possibly nondeterministic) decision rule $\hat{p}_{n}: \{0,1\}^{n} \to \{(1+\gamma)/2, (1-\gamma)/2\}$, 
\begin{multline}
\label{eqn:coin-bias-lower}
\frac{1}{2} \sum_{p \in \{(1+\gamma)/2,(1-\gamma)/2\}} \P( \hat{p}_{n}(B_1(p),\ldots,B_{n}(p)) \neq p ) 
\\ \geq (1/32) \cdot \exp\left\{-128 \gamma^{2} n / 3\right\}.
\end{multline}
This easily follows from the results of \citet{bar-yossef:03}, combined with a result of \citet{poland:06} bounding the KL divergence
(see also \cite{wald:45})

To use this result, we construct a learning problem as follows.
Fix some $m \in \nats$ with $m \geq \vc$, let 
$\X = \{1,\ldots,m\}$,
and let $\C$ be the space of all classifiers $h : \X \to \{-1,+1\}$ such that 
$|\{x \in \X : h(x) = +1\}| \leq \vc$.
Clearly the VC dimension of $\C$ is $\vc$.
Define the distribution $\D$ 
as uniform over $\X$.
Finally, we specify a family of $(L,\alpha)$-H\"{o}lder smooth priors, parameterized by $\Params = \{-1,+1\}^{\binom{m}{\vc}}$, as follows.
Let $\gamma_{m} = (L/2)(1/m)^{\alpha}$.
First, enumerate the $\binom{m}{\vc}$ distinct $d$-sized subsets of $\{1,\ldots,m\}$ as $\X_1,\X_2,\ldots,\X_{\binom{m}{\vc}}$.
Define the reference distribution $\prior_{0}$ by the property that, for any $h \in \C$, letting $q = |\{x : h(x)=+1\}|$, $\prior_{0}(\{h\}) = (\frac{1}{2})^{\vc} \binom{m-q}{\vc-q} / \binom{m}{\vc}$.
For any $\mathbf{b} = (b_1,\ldots,b_{\binom{m}{\vc}}) \in \{-1,1\}^{\binom{m}{\vc}}$, define the prior $\prior_{\mathbf{b}}$ as the distribution of a random variable $h_{\mathbf{b}}$ specified by the following generative model.
Let $i^{*} \sim {\rm Uniform}(\{1,\ldots,\binom{m}{\vc}\})$, let $C_{\mathbf{b}}(i^*) \sim {\rm Bernoulli}((1+\gamma_{m} b_{i^*})/2)$;
finally, $h_{\mathbf{b}} \sim {\rm Uniform}(\{ h \in \C : \{x : h(x)=+1\} \subseteq \X_{i^*}, \Parity(|\{x : h(x)=+1\}|)=C_{\mathbf{b}}(i^*)\})$, where $\Parity(n)$ is $1$ if $n$ is odd, or $0$ if $n$ is even.
We will refer to the variables in this generative model below.
For any $h \in \C$, letting $H = \{x : h(x)=+1\}$ and $q = |H|$, we can equivalently express 
$\prior_{\mathbf{b}}(\{h\}) = (\frac{1}{2})^{\vc} \binom{m}{\vc}^{-1} \sum_{i=1}^{\binom{m}{\vc}} \ind[ H \subseteq \X_{i} ] (1+\gamma_m b_i)^{\Parity(q)} (1-\gamma_m b_i)^{1-\Parity(q)}$.
From this explicit representation, it is clear that, letting $\density_{\mathbf{b}} = \frac{{\rm d} \prior_{\mathbf{b}}}{{\rm d} \prior_{0}}$, 
we have $\density_{\mathbf{b}}(h) \in [1-\gamma_m,1+\gamma_m]$ for all $h \in \C$.
The fact that $\density_{\mathbf{b}}$ is H\"{o}lder smooth follows from this, 
since every distinct $h,g \in \C$ have 
$\D(\{x : h(x) \neq g(x)\}) \geq 1/m = (2\gamma_{m} / L)^{1/\alpha}$.

Next we set up the reduction as follows.
For any estimator $\hat{\prior}_{T} = \hat{\prior}_{T}(\Data^{1}_{\vc}(\TruParam),$ $\ldots,\Data^{T}_{\vc}(\TruParam))$, 
and each $i \in \{1,\ldots,\binom{m}{\vc}\}$, let $h_{i}$ be the classifier with $\{x : h_{i}(x)=+1\} = \X_i$;
also, if $\hat{\prior}_{T}(\{h_i\}) > (\frac{1}{2})^{\vc} / \binom{m}{d}$, let $\hat{b}_{i} = 2\Parity(\vc)-1$,
and otherwise $\hat{b}_{i} = 1 - 2\Parity(\vc)$.
We use these $\hat{b}_i$ values to estimate the original $b_i$ values.
Specifically, let $\hat{p}_i = (1 + \gamma_{m} \hat{b}_{i})/2$ and $p_i = (1+\gamma_{m} b_i)/2$, where $\mathbf{b} = \TruParam$.
Then 
\begin{align*}
\| \hat{\prior}_{T} - \prior_{\TruParam} \| 
& \geq (1/2)\sum_{i=1}^{\binom{m}{\vc}} | \hat{\prior}_{T}(\{h_i\}) - \prior_{\TruParam}(\{h_i\}) | 
\\ & \geq (1/2)\sum_{i=1}^{\binom{m}{\vc}} \frac{\gamma_{m}}{2^{\vc} \binom{m}{\vc}} | \hat{b}_{i} - b_{i} | / 2
= (1/2) \sum_{i=1}^{\binom{m}{\vc}} \frac{1}{2^{\vc} \binom{m}{\vc}} | \hat{p}_{i} - p_{i} |.
\end{align*}
Thus, we have reduced from the problem of deciding the biases of these $\binom{m}{\vc}$ independent Bernoulli random variables.
To complete the proof, it suffices to lower bound the expectation of the right side for an \emph{arbitrary} estimator.

Toward this end, we in fact study an even easier problem.  Specifically, consider an estimator $\hat{q}_{i} = \hat{q}_{i}(\Data^{1}_{\vc}(\TruParam),\ldots,\Data^{T}_{\vc}(\TruParam), i_1^*, \ldots, i_T^*)$,
where $i_t^*$ is the $i^*$ random variable in the generative model that defines $\target_{t \TruParam}$;
that is, $i_t^{*} \sim {\rm Uniform}(\{1,$ $\ldots,\binom{m}{\vc}\})$, $C_{t} \sim {\rm Bernoulli}((1+\gamma_{m} b_{i_t^*})/2)$, and $\target_{t \TruParam} \sim {\rm Uniform}(\{ h \in \C : \{x : h(x)=+1\} \subseteq \X_{i_t^*}, \Parity(|\{x : h(x)=+1\}|)=C_{t}\})$,
where the $i_t^*$ are independent across $t$, as are the $C_{t}$ and $\target_{t \TruParam}$.
Clearly the $\hat{p}_{i}$ from above can be viewed as an estimator of this type, which simply ignores the knowledge of $i_t^*$.
The knowledge of these $i_t^*$ variables simplifies the analysis, since given $\{i_t^* : t \leq T\}$,
the data can be partitioned into $\binom{m}{\vc}$ disjoint sets, $\{\{\Data^{t}_{\vc}(\TruParam) : i_t^* = i\} : i = 1,\ldots,\binom{m}{\vc}\}$,
and we can use only the set $\{\Data^{t}_{\vc}(\TruParam) : i_t^* = i\}$ to estimate $p_i$.
Furthermore, we can use only the subset of these for which $\DataX_{t \vc} = \X_i$,
since otherwise we have zero information about the value of $\Parity(|\{x:\target_{t \TruParam}(x)=+1\}|)$.
That is, given $i_t^* = i$, any $\Data^{t}_{\vc}(\TruParam)$ is conditionally independent from every $b_j$ for $j \neq i$,
and is even conditionally independent from $b_i$ when $\DataX_{t \vc}$ is not completely contained in $\X_i$;
specifically, in this case, regardless of $b_i$, 
the conditional distribution of $\DataY_{t \vc}(\TruParam)$ given $i_t^* = i$ and given $\DataX_{t \vc}$ is
a product distribution, which deterministically assigns label $-1$ to those $Y_{t k}(\TruParam)$ with $X_{t k} \notin \X_i$,
and gives uniform random values to the subset of $\DataY_{t \vc}(\TruParam)$ with their respective $X_{t k} \in \X_i$.
Finally, letting $r_t = \Parity(|\{k \leq \vc : Y_{t k}(\TruParam) = +1\}|)$,
we note that given $i_t^* = i$, $\DataX_{t \vc} = \X_i$, and the value $r_t$, 
$b_i$ is conditionally independent from $\Data^{t}_{\vc}(\TruParam)$.
Thus, the set of values $C_{i T}(\TruParam) = \{ r_t : i_t^* = i, \DataX_{t \vc} = \X_i\}$ is a sufficient statistic for $b_i$ (hence for $p_i$).
Recall that, when $i_t^* = i$ and $\DataX_{t \vc} = \X_i$, the value of $r_t$ is equal to $C_{t}$, a ${\rm Bernoulli}(p_i)$ random variable.
Thus, we neither lose nor gain anything (in terms of risk) by restricting ourselves to estimators $\hat{q}_{i}$ of the type 
$\hat{q}_{i} = \hat{q}_{i}(\Data^{1}_{\vc}(\TruParam), \ldots, \Data^{T}_{\vc}(\TruParam), i_1^*,\ldots, i_T^*) = \hat{q}_{i}^{\prime}(C_{i T}(\TruParam))$,
for some $\hat{q}_{i}^{\prime}$ \citep{schervish:95}: that is, estimators that are a function of the $N_{i T}(\TruParam) = |C_{i T}(\TruParam)|$ ${\rm Bernoulli}(p_i)$
random variables, which we should note are conditionally i.i.d. given $N_{i T}(\TruParam)$.

Thus, by \eqref{eqn:coin-bias-lower}, for any $n \leq T$, 
\begin{align*}
\frac{1}{2} \sum_{b_i \in \{-1,+1\}} \!\E\left[ | \hat{q}_i - p_i | \Big| N_{i T}(\TruParam) = n \right] 
& = \frac{1}{2} \sum_{b_i \in \{-1,+1\}} \!\gamma_{m} \P\left( \hat{q}_i \neq p_i \Big| N_{i T}(\TruParam) = n\right)
\\ & \geq (\gamma_{m}/32) \cdot \exp\left\{ - 128 \gamma_{m}^{2} N_i / 3 \right\}.
\end{align*}
Also note that, for each $i$, 
$\E[N_i] = \frac{\vc! (1/m)^{\vc}}{\binom{m}{\vc}}T \leq (\vc / m)^{2\vc} T = \vc^{2\vc} (2 \gamma_{m} / L)^{2\vc/\alpha} T$.
Thus, Jensen's inequality, linearity of expectation, and the law of total expectation imply
\[
\frac{1}{2} \sum_{b_i \in \{-1,+1\}} \E\left[ | \hat{q}_i - p_i | \right] 
\geq (\gamma_{m} /32) \cdot \exp\left\{ - 43 (2/L)^{2\vc/\alpha} \vc^{2\vc} \gamma_{m}^{2+ 2\vc/\alpha} T \right\}.
\]
Thus, by linearity of the expectation, 
\begin{align*}
& \left(\frac{1}{2}\right)^{\binom{m}{\vc}} \!\!\!\!\sum_{\mathbf{b} \in \{-1,+1\}^{\binom{m}{\vc}}} \!\!\!\!\E\left[\sum_{i=1}^{\binom{m}{\vc}} \frac{1}{2^{\vc} \binom{m}{\vc}} | \hat{q}_{i} - p_{i} | \right] 
= \sum_{i=1}^{\binom{m}{\vc}} \frac{1}{2^{\vc} \binom{m}{\vc}}  \frac{1}{2} \sum_{b_i \in \{-1,+1\}} \!\!\!\!\E\left[ | \hat{q}_{i} - p_{i} | \right] 
\\ & \geq (\gamma_{m} / (32 \cdot 2^{\vc})) \cdot \exp\left\{ - 43 (2/L)^{2\vc/\alpha} \vc^{2\vc} \gamma_{m}^{2+ 2\vc/\alpha} T \right\}.
\end{align*}
In particular, taking 
$m = \left\lceil (L/2)^{1/\alpha} \left(43 (2/L)^{2\vc/\alpha} \vc^{2\vc} T\right)^{\frac{1}{2(\vc+\alpha)}} \right\rceil$,
we have 
$\gamma_{m} = \Theta\left( \left( 43 (2/L)^{2\vc/\alpha} \vc^{2\vc} T \right)^{- \frac{\alpha}{2(\vc+\alpha)}}\right)$,
so that 
\begin{multline*}
 \left(\frac{1}{2}\right)^{\binom{m}{\vc}} \sum_{\mathbf{b} \in \{-1,+1\}^{\binom{m}{\vc}}} \E\left[\sum_{i=1}^{\binom{m}{\vc}} \frac{1}{2^{\vc} \binom{m}{\vc}} | \hat{q}_{i} - p_{i} | \right] 
\\ = \Omega\left( 2^{-\vc} \left( 43 (2/L)^{2\vc/\alpha} \vc^{2\vc} T \right)^{- \frac{\alpha}{2(\vc+\alpha)}}\right).
\end{multline*}
In particular, this implies there exists some $\mathbf{b}$ for which 
\[
\E\left[\sum_{i=1}^{\binom{m}{\vc}} \frac{1}{2^{\vc} \binom{m}{\vc}} | \hat{q}_{i} - p_{i} | \right] 
= \Omega\left( 2^{-\vc} \left( 43 (2/L)^{2\vc/\alpha} \vc^{2\vc} T \right)^{- \frac{\alpha}{2(\vc+\alpha)}}\right).
\]
Applying this lower bound to the estimator $\hat{p}_i$
above yields the result. \qed
\end{proof}

It is natural to wonder how these rates might potentially improve if we allow $\hat{\param}_{T}$
to depend on more than $\vc$ samples per data set.  To establish limits on such improvements,
we note that in the extreme case of allowing the estimator to depend on the full $\Data^{t}(\TruParam)$
data sets, we may recover the known results lower bounding the risk of density 
estimation from i.i.d. samples from a smooth density, as indicated by the following result.

\begin{theorem}
\label{thm:absolute-lower-bound}
For any integer $\vc \geq 1$, there exists 
an instance space $\X$, 
a concept space $\C$ of VC dimension $\vc$,
a distribution $\D$ over $\X$,
and a distribution $\prior_{0}$ over $\C$ such that,
for $\Priors_{\Params}$ the set of distributions over $\C$ with $(L,\alpha)$-H\"{o}lder smooth density functions with respect to $\prior_{0}$,
any sequence of estimators,
$\hat{\param}_{T} = \hat{\param}_{T}(\Data^{1}(\TruParam),\ldots,\Data^{T}(\TruParam))$ ($T = 1,2,\ldots$),
has 
\begin{equation*}
\sup_{\TruParam \in \Params} \E\left[ \| \prior_{\hat{\param}_{T}} - \prior_{\TruParam} \| \right] = \Omega\left( T^{- \frac{\alpha}{\vc + 2\alpha}} \right).
\end{equation*}
\end{theorem}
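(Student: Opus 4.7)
The plan is to reduce the statement to the classical minimax lower bound for density estimation of $(L,\alpha)$-H\"{o}lder smooth densities on a $\vc$-dimensional domain, by exploiting the fact that access to the full data sequence per task lets the estimator recover each target concept exactly.

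First I would construct a concept space whose metric structure is genuinely $\vc$-dimensional. Take $\X = \{1,\ldots,\vc\} \times \{1,\ldots,M\}$ for an integer $M$ to be chosen polynomially large in $T$, and let $\C$ be the family of threshold functions $h_{\theta}(i,j) = \sign(j - \theta_i - 1/2)$ indexed by $\theta \in \{1,\ldots,M\}^{\vc}$, which has VC dimension exactly $\vc$. Let $\D$ be uniform on $\X$, so that $\rho(h_{\theta},h_{\theta'}) = \frac{1}{\vc M}\sum_{i=1}^{\vc}|\theta_i - \theta'_i|$ is a rescaled $L_1$ grid metric, and let $\prior_{0}$ be the uniform distribution on $\C$. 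Densities of priors with respect to $\prior_{0}$ then correspond (up to normalization) to ordinary probability mass functions on the grid $\{1,\ldots,M\}^{\vc}$, and the $(L,\alpha)$-H\"{o}lder condition becomes a discretized H\"{o}lder condition in the rescaled $L_1$ metric.

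Second I would use the infinite data per task to eliminate concept identification. Because $\C$ is finite and $\D$ puts positive mass on every $x \in \X$, with probability one the sequence $\{X_{t1},X_{t2},\ldots\}$ visits every $x \in \X$, so $\Data^{t}(\TruParam)$ determines $\target_{t\TruParam}$ exactly. Consequently any estimator $\hat{\param}_{T}(\Data^{1}(\TruParam),\ldots,\Data^{T}(\TruParam))$ is, almost surely, a measurable function of the i.i.d.\ samples $\target_{1\TruParam},\ldots,\target_{T\TruParam} \sim \prior_{\TruParam}$ together with $\TruParam$-independent randomness. It therefore suffices to lower-bound the minimax total-variation risk of estimating $\prior_{\TruParam}$ from $T$ i.i.d.\ draws from $\prior_{\TruParam}$.

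Third I would invoke the standard Assouad-style construction for density estimation over $(L,\alpha)$-H\"{o}lder classes on a $\vc$-dimensional domain. Partition the grid into $N = \Theta(h^{-\vc})$ sub-cubes of side length on the order of $hM$, where $h$ will be taken of order $T^{-1/(\vc+2\alpha)}$. For each sign vector $\sigma \in \{-1,+1\}^{N}$, form a density $\density_{\sigma}$ by adding, in cube $k$, a bump of amplitude $L h^{\alpha}$ and sign $\sigma_k$ (tapered near cube boundaries so the sum remains $(L,\alpha)$-H\"{o}lder) to the uniform density. A single coordinate flip produces $L_1$-difference $\Omega(L h^{\alpha+\vc})$, while the per-sample KL between neighbors is $O(h^{2\alpha+\vc})$, so the KL over $T$ samples is $O(T h^{2\alpha+\vc})$. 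Choosing $h \sim T^{-1/(\vc+2\alpha)}$ keeps this $O(1)$, and Assouad's lemma then delivers a TV lower bound of order $N \cdot L h^{\alpha+\vc} = L h^{\alpha} = \Omega(L T^{-\alpha/(\vc+2\alpha)})$, as desired.

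The main obstacle is the discretization: one must verify that the continuous bump construction has a discrete analog on $\{1,\ldots,M\}^{\vc}$ yielding functions that are genuinely $(L,\alpha)$-H\"{o}lder in $\rho$ and still satisfy the stated TV and KL estimates. Since the smallest nonzero $\rho$-distance between elements of $\C$ is $1/(\vc M)$, choosing $M$ large enough that $1/(\vc M) \ll h$ (e.g., $M \gtrsim T^{2/(\vc+2\alpha)}$) makes the grid much finer than the bump scale, so Riemann-sum approximations carry the standard continuous analysis through with negligible error.
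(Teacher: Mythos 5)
Your overall route is exactly the one the paper intends: its ``proof'' is a one-sentence reduction to the classical minimax lower bound for estimating an $(L,\alpha)$-H\"{o}lder density in $\vc$ dimensions from $T$ i.i.d.\ draws, and your proposal supplies precisely that reduction (full data per task identifies $\target_{t\TruParam}$ almost surely, so the estimator is a randomized function of $T$ i.i.d.\ draws from $\prior_{\TruParam}$, and randomization cannot beat the Bayes/Assouad bound) together with a standard Assouad calculation whose exponent bookkeeping ($L_1$ separation $Lh^{\alpha+\vc}$ per flip, per-sample KL $O(L^2h^{2\alpha+\vc})$, $h\sim T^{-1/(\vc+2\alpha)}$) is correct.

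The one genuine defect is a quantifier mismatch: you let $M$ (hence $\X$, $\C$, $\D$, $\prior_{0}$) grow polynomially with $T$, but the theorem asserts the existence of a \emph{single} fixed construction for which the bound holds along the whole sequence $T=1,2,\ldots$ (contrast with Theorem~2, which explicitly re-chooses the space for each $T$). This is not cosmetic: for any fixed finite $\C$ the minimax risk eventually decays at the parametric rate $T^{-1/2}$, which is strictly faster than $T^{-\alpha/(\vc+2\alpha)}$, so no finite grid can witness the stated asymptotic lower bound. The fix is to pass to the continuum version of your own construction --- e.g.\ $\X=\{1,\ldots,\vc\}\times[0,1]$ with threshold classifiers indexed by $\theta\in[0,1]^{\vc}$, $\D$ uniform, and $\prior_{0}$ the image of Lebesgue measure, so that $\rho$ is the normalized $L_1$ metric on $[0,1]^{\vc}$ --- where the Assouad subfamily at scale $h(T)$ is contained in the fixed H\"{o}lder class for every $T$ and no discretization error arises. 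With that change (and the routine constant adjustments you already flag for keeping the perturbed densities nonnegative and genuinely $(L,\alpha)$-H\"{o}lder), the argument is complete.
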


The proof is a simple reduction from the problem of estimating $\prior_{\TruParam}$ 
based on direct access to $\target_{1 \TruParam}, \ldots, \target_{T \TruParam}$, 
which is essentially equivalent to the standard model of density estimation, and indeed
the lower bound in Theorem~\ref{thm:absolute-lower-bound} is a well-known result
for density estimation from $T$ i.i.d. samples from a H\"{o}lder smooth density in a 
$\vc$-dimensional space \citep{devroye:01}.

\section{Real-Valued Functions and an Application in Algorithmic Economics}
\label{sec:real-valued}

In this section, we present results generalizing the analysis of \cite{yang:13} to classes of real-valued
functions.  We also present an application of this generalization to a preference elicitation problem.

\subsection{Consistent Estimation of Priors over Real-Valued Functions at a Bounded Rate}

In this section, we let $\Borel$ denote a $\sigma$-algebra on $\X \times \reals$,
and again let $\BorelX$ denote the corresponding $\sigma$-algebra on $\X$.
Also, for measurable functions $h,g : \X \to \reals$, let $\rho(h,g) = \int |h-g| {\rm d}P_{X}$,
where $P_{X}$ is a distribution over $\X$.
Let $\F$ be a class of functions $\X \to \reals$ with Borel $\sigma$-algebra $\Borel_{\F}$ induced by $\rho$.
Let $\Params$ be a set, and for each $\param \in \Params$, let $\prior_{\param}$ denote a probability measure on $(\F,\Borel_{\F})$.
We suppose $\{\prior_{\param} : \param \in \Params\}$ is totally bounded in total variation distance, and that $\F$ is a uniformly bounded VC subgraph class with pseudodimension $\vc$.
We also suppose $\rho$ is a \emph{metric} when restricted to $\F$.

As above, 
let $\{X_{ti}\}_{t,i \in \nats}$ be i.i.d. $P_{X}$ random variables.
For each $\param \in \Params$, let $\{\target_{t \param} \}_{t\in\nats}$ be i.i.d. $\prior_{\param}$ random variables, independent from $\{X_{ti}\}_{t,i\in\nats}$.
For each $t \in \nats$ and $\param \in \Params$,
let $Y_{ti}(\param) = \target_{t \param}(X_{ti})$ for $i \in \nats$,
and let $\Data^{t}(\param) = \{(X_{t1},Y_{t1}(\param)), (X_{t2},Y_{t2}(\param)),\ldots\}$;
for each $k \in \nats$, define $\Data^{t}_{k}(\param) = \{(X_{t1},Y_{t1}(\param)),$ $\ldots, (X_{tk},Y_{tk}(\param))\}$,
$\DataX_{tk} = \{X_{t1},\ldots,X_{tk}\}$, and $\DataY_{tk}(\param) = \{Y_{t1}(\param),\ldots,Y_{tk}(\param)\}$.

We have the following result.
The proof parallels that of \cite{yang:13} (who studied the special case of binary functions),
with a few important twists (in particular, a significantly different approach in the analogue of their Lemma 3).
The details are included in Appendix~\ref{app:real-valued}.

\begin{theorem}
\label{thm:consistency}
There exists an estimator $\hat{\param}_{T \TruParam} = \hat{\param}_{T}(\Data^{1}_{\vc}(\TruParam),\ldots,\Data^{T}_{\vc}(\TruParam))$,
and functions $R : \nats_{0} \times (0,1] \to [0,\infty)$ and $\delta : \nats_{0} \times (0,1] \to [0,1]$ such that, 
for any $\alpha > 0$, $\lim\limits_{T\to\infty} R(T,\alpha) = \lim\limits_{T\to\infty} \delta(T,\alpha) = 0$ and for any $T \in \nats_{0}$
and $\TruParam \in \Params$, 
\[
\P\left( \| \prior_{\hat{\param}_{T \TruParam}} - \prior_{\TruParam} \| > R(T,\alpha) \right) \leq \delta(T,\alpha) \leq \alpha.
\]
\end{theorem}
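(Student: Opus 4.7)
The plan is to follow the proof strategy of \cite{yang:13} closely, with one substantive modification to handle real-valued outputs. The architecture has two ingredients: (i) an \emph{injectivity} lemma showing that $\theta \mapsto \P_{\Data_\vc(\theta)}$ is injective on $\Params$ (modulo identification of $\theta,\theta'$ with $\prior_\theta = \prior_{\theta'}$), so that the observable $\vc$-sample joint distribution determines the prior, and (ii) a Yatracos-type minimum-distance-skeleton estimator targeting $\P_{\Data_\vc(\TruParam)}$, whose high-probability convergence follows from total boundedness. Combining injectivity with total boundedness of $\{\prior_\theta\}$ (hence of $\{\P_{\Data_\vc(\theta)}\}$) by a standard compactness argument yields a uniform modulus of continuity for the inverse map, so any estimator consistent on the observable side pulls back to a consistent estimator for $\prior_\theta$ in total variation.

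The step that genuinely requires a new argument is the injectivity lemma, the real-valued analogue of the corresponding step in \cite{yang:13}. My proposed substitute is a \emph{thresholding reduction}. Using uniform boundedness of $\F$, assume $\F \subseteq [0,M]^\X$, and associate to each $h \in \F$ its subgraph indicator $\tilde{h}(x,u) = \ind[h(x) \geq u]$ on $\X \times [0,M]$; the class $\tilde{\F} = \{\tilde{h} : h \in \F\}$ has VC dimension equal to the pseudodimension $\vc$ of $\F$ (this is the definition of pseudodimension). The map $h \mapsto \tilde{h}$ is a measurable bijection under the natural Borel $\sigma$-algebras induced by the $L_1$ pseudo-metrics, so the push-forward $\tilde{\prior}_\theta$ satisfies $\|\tilde{\prior}_\theta - \tilde{\prior}_{\theta'}\| = \|\prior_\theta - \prior_{\theta'}\|$. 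Since the learner observes the actual values $Y_{ti} = \target_{t\theta}(X_{ti})$, she may freely simulate auxiliary i.i.d.\ $U_{ti} \sim \mathrm{Unif}[0,M]$ and form binary samples $(\tilde X_{ti}, \tilde Y_{ti}) = ((X_{ti}, U_{ti}), \ind[Y_{ti} \geq U_{ti}])$; the joint distribution of these binary samples is a deterministic function of $\P_{\Data_\vc(\theta)}$, so $\P_{\Data_\vc(\theta)} = \P_{\Data_\vc(\theta')}$ implies equality of the corresponding binary $\vc$-sample distributions. Invoking the binary-case injectivity result of \cite{yang:13} for $\tilde{\F}$ (VC dimension $\vc$, data distribution $P_X \otimes \mathrm{Unif}[0,M]$, $\vc$ binary samples per task) then gives $\tilde{\prior}_\theta = \tilde{\prior}_{\theta'}$, hence $\prior_\theta = \prior_{\theta'}$.

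With injectivity in hand, the remainder proceeds by standard arguments from \cite{yang:13}: total boundedness of $\{\prior_\theta\}$ implies total boundedness of $\{\P_{\Data_\vc(\theta)}\}$ in total variation, so a Yatracos minimum-distance-skeleton estimator over a shrinking $\eps$-net achieves, with probability at least $1-\alpha$, total variation distance at most $\eps(T,\alpha) \to 0$ from $\P_{\Data_\vc(\TruParam)}$; the compactness-based modulus of continuity then yields a function $R(T,\alpha) \to 0$ on the prior side, giving the theorem. The main obstacle, as anticipated, is the injectivity step: verifying the thresholding reduction at the $\sigma$-algebra level so that $\tilde{\prior}_\theta$ is well-defined and TV-preservation is rigorous, and confirming that the binary injectivity result from \cite{yang:13} applies verbatim to $\tilde{\F}$ over the enlarged instance space $\X \times [0,M]$ using only $\vc$ binary samples per task — both of which I believe hold but require careful bookkeeping, and which the authors acknowledge as the analogue of their Lemma~3 requiring a significantly different approach.
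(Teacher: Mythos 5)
Your thresholding reduction is a sound idea and, notably, it is the same kernel the paper itself uses but packaged differently: inside the paper's Lemma~\ref{lem:k-to-d} (the analogue of Lemma~3 of \cite{yang:13}) the argument decomposes events into products of sets $A_j \times (-\infty,t_j]$ and then works with the indicators $I_{(-\infty,t_j]}(Y_{tj})$ --- i.e., with the subgraph class --- exploiting exactly the fact that the thresholded class has VC dimension at most the pseudodimension $\vc$. Where you differ is in architecture: you reduce the whole problem wholesale to the binary setting over the enlarged instance space $\X\times[0,M]$ with data distribution $P_X\otimes\mathrm{Unif}[0,M]$, whereas the paper re-derives the full chain of quantitative lemmas (its Lemmas~\ref{lem:prior-to-infty}--\ref{lem:k-to-d}) for the real-valued case directly. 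Your route, \emph{if executed at the level of distributions rather than merely for injectivity}, is arguably cleaner: the simulated binary data $((X_{ti},U_{ti}),\ind[Y_{ti}\ge U_{ti}])$ is distributed exactly as a binary-model sample with target $\widetilde{\target}_{t\param}\sim\tilde\prior_\param$, so one could invoke the binary consistency theorem of \cite{yang:13} as a black box and conclude via $\|\tilde\prior_{\hat\param}-\tilde\prior_{\TruParam}\|=\|\prior_{\hat\param}-\prior_{\TruParam}\|$. (Two caveats you should address: the resulting estimator is randomized, whereas the theorem asks for a function of the data alone; and you must verify that $\tilde\rho$ is a metric on $\tilde\F$ and that $\{\tilde\prior_\param\}$ is totally bounded --- both follow from your observation that $h\mapsto\tilde h$ is an isometry up to the factor $1/M$.)

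The genuine gap is the step ``injectivity plus total boundedness yields a uniform modulus of continuity for the inverse map by a standard compactness argument.'' Total boundedness does not give compactness ($\{\prior_\param:\param\in\Params\}$ need not be complete in total variation), and without compactness an injective $1$-Lipschitz map can fail to have a uniformly continuous inverse: there can exist $\param_n,\param_n'$ with $\|\P_{\Data_\vc(\param_n)}-\P_{\Data_\vc(\param_n')}\|\to 0$ while $\|\prior_{\param_n}-\prior_{\param_n'}\|$ stays bounded away from zero, the offending Cauchy sequences converging only in the completion, where injectivity is not guaranteed. This is precisely why the paper never argues via injectivity: Lemmas~\ref{lem:infty-to-k} and \ref{lem:k-to-d} establish the uniform modulus \emph{directly}, by proving a quantitative per-pair bound (via Carath\'{e}odory approximation by finitely many product sets) for each pair in a finite $\gamma$-net of $\Params$, maximizing over the net, and extending to all of $\Params$ by triangle inequalities together with the data-processing inequality. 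To repair your proof, either reproduce that net-plus-triangle-inequality argument, or --- better, and entirely within your own framework --- run the thresholding reduction quantitatively: data processing bounds the total variation distance between the laws of the simulated binary $\vc$-samples by $\|\P_{\Data_\vc(\param)}-\P_{\Data_\vc(\param')}\|$, and the binary-case lemmas of \cite{yang:13} already supply the uniform modulus from there to $\|\tilde\prior_\param-\tilde\prior_{\param'}\|=\|\prior_\param-\prior_{\param'}\|$. Injectivity alone is not enough.
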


\subsection{Maximizing Customer Satisfaction in Combinatorial Auctions}
\label{sec:satisfaction}

Theorem~\ref{thm:consistency} has a clear application in the context of transfer learning,
following analogous arguments to those given in the special case of binary classification by \cite{yang:13}.
In addition to that application, we can also use Theorem~\ref{thm:consistency} in the context of 
the following problem in algorithmic economics, where the objective is to serve a sequence of 
customers so as to maximize their satisfaction.

Consider an online travel agency, where customers go to the site with some idea of what type 
of travel they are interested in; the site then poses a series of questions to each customer,
and identifies a travel package that best suits their desires, budget, and dates.  There are 
many options of travel packages, with options on location, site-seeing tours, hotel and room quality,
etc.  Because of this, serving the needs of an \emph{arbitrary} customer might be a lengthy process,
requiring many detailed questions.  Fortunately, the stream of customers is typically not a worst-case
sequence, and in particular obeys many statistical regularities: in particular, it is not too
far from reality to think of the customers as being independent and identically distributed samples.
With this assumption in mind, it becomes desirable to identify some of these statistical regularities
so that we can pose the questions that are typically most relevant, and thereby more quickly identify
the travel package that best suits the needs of the typical customer.  One straightforward
way to do this is to directly \emph{estimate} the distribution of customer value functions,
and optimize the questioning system to minimize the expected number of questions needed to 
find a suitable travel package.

One can model this problem in the style of Bayesian combinatorial auctions, in which 
each customer has a value function for each possible bundle of items.  However, it is 
slightly different, in that we do not assume the distribution of customers is known,
but rather are interested in estimating this distribution; the obtained estimate can then 
be used in combination with methods based on Bayesian decision theory.  In contrast to 
the literature on Bayesian auctions (and subjectivist Bayesian decision theory in general), 
this technique is able to maintain general guarantees on performance that hold under an 
objective interpretation of the problem, rather than merely guarantees holding under an 
arbitrary assumed prior belief.
This general idea is sometimes referred to as \emph{Empirical Bayesian} decision theory
in the machine learning and statistics literatures.
The ideal result for an Empirical Bayesian algorithm is to 
be competitive with the corresponding Bayesian methods based on the \emph{actual} distribution
of the data (assuming the data are random, with an unknown distribution);
that is, although the Empirical Bayesian methods only operate with a data-based estimate of the 
distribution, the aim is to perform nearly as well as methods based on the true (unobservable)
distribution.  In this work, we present results of this type, in the context of an abstraction of 
the aforementioned online travel agency problem, where the measure of performance is the 
expected number of questions to find a suitable package.

The specific application we are interested in here may be expressed abstractly as 
a kind of combinatorial auction with preference elicitation.
Specifically, we suppose there is a collection of items on a menu, 
and each possible bundle of items has an associated fixed price.  There is a stream of 
customers, each with a valuation function that provides a value for each possible bundle of 
items.  The objective is to serve each customer a bundle of items that nearly-maximizes
his or her surplus value (value minus price).  However, we are not permitted direct observation 
of the customer valuation functions; rather, we may query for the value of any given 
bundle of items; this is referred to as a \emph{value query} in the literature on 
preference elicitation in combinatorial auctions (see Chapter 14 of~\cite{CSS06}, \cite{zinkevich:03}).
The objective is to achieve this near-maximal surplus guarantee, while 
making only a small number of queries per customer.  We suppose the customer valuation function
are sampled i.i.d. according to an unknown distribution over a known (but arbitrary) class of
real-valued functions having finite pseudo-dimension.  Reasoning that knowledge of this 
distribution should allow one to make a smaller number of value queries per customer, 
we are interested in estimating this unknown distribution, so that as we serve more and more
customers, the number of queries per customer required to identify a near-optimal bundle
should decrease.  In this context, we in fact prove that in the limit, the expected number
of queries per customer converges to the number required of a method having direct knowledge
of the true distribution of valuation functions. 

Formally, suppose there is a menu of $n$ items $[n] = \{1,\ldots,n\}$, and each bundle $B \subseteq [n]$ 
has an associated price $p(B) \geq 0$.  Suppose also there is a sequence of customers, each with a 
valuation function $v_{t} : 2^{[n]} \to \reals$.  We suppose these $v_{t}$ functions are i.i.d. samples.
We can then calculate the satisfaction function for each customer as $s_{t}(x)$, where $x \in \{0,1\}^{n}$,
and $s_{t}(x) = v_{t}(B_{x}) - p(B_{x})$, where $B_{x} \subseteq [n]$ contains element $i \in [n]$ iff $x_i = 1$.

Now suppose we are able to ask each customer a number of questions before serving up a bundle $B_{\hat{x}_{t}}$ 
to that customer.  More specifically, we are able to ask for the value $s_{t}(x)$ for any $x \in \{0,1\}^{n}$.  This is referred to as a \emph{value query} in the literature on 
preference elicitation in combinatorial auctions (see Chapter 14 of~\cite{CSS06}, \cite{zinkevich:03}).
We are interested in asking as few questions as possible, while satisfying the 
guarantee that $\E[ s_{t}(\hat{x}_{t}) - \max_{x} s_{t}(x)] \leq \eps$.

Now suppose, for every $\prior$ and $\eps$, we have a method $A(\prior,\eps)$ such that, 
given that $\prior$ is the actual distribution of the $s_{t}$ functions,
$A(\prior,\eps)$ guarantees that the $\hat{x}_{t}$ value it selects has $\E[ \max_{x} s_{t}(x) - s_{t}(\hat{x}_{t}) ] \leq \eps$;
also let $\hat{N}_{t}(\prior,\eps)$ denote the actual (random) number of queries the method $A(\prior,\eps)$ would ask for the $s_{t}$ function,
and let $Q(\prior,\eps) = \E[ \hat{N}_{t}(\prior,\eps) ]$.
We suppose the method never queries any $s_{t}(x)$ value twice for a given $t$, so that its number of queries for any given $t$ is bounded.

Also suppose $\F$ is a VC subgraph class of functions mapping $\X = \{0,1\}^{n}$ into $[-1,1]$ with pseudodimension $\vc$,
and that $\{\prior_{\param} : \param \in \Params\}$ is a known totally bounded family of
distributions over $\F$ such that the $s_{t}$ functions have distribution $\prior_{\TruParam}$ 
for some unknown $\TruParam \in \Params$.
For any $\param \in \Params$ and $\gamma > 0$, let $\Ball(\param,\gamma) = \{\param^{\prime} \in \Params : \| \prior_{\param} - \prior_{\param^{\prime}}\| \leq \gamma\}$.

Suppose, in addition to $A$, we have another method $A^{\prime}(\eps)$ that is not $\prior$-dependent, but still provides the $\eps$-correctness guarantee,
and makes a bounded number of queries (e.g., in the worst case, we could consider querying all $2^n$ points, but in most cases there are more clever $\prior$-independent methods that use far fewer queries, such as $O(1/\eps^2)$).
Consider the following method; 
the quantities $\hat{\param}_{T\TruParam}$, $R(T,\alpha)$, and $\delta(T,\alpha)$ from Theorem~\ref{thm:consistency} 
are here considered with respect $P_{X}$ taken as the uniform distribution on $\{0,1\}^{n}$.

\begin{algorithm}[h!]
\begin{algorithmic}
\FOR {$t  = 1, 2, \ldots,T$}
\STATE Pick points $X_{t1},X_{t2},\ldots,X_{t\vc}$ uniformly at random from $\{0,1\}^{n}$
\IF {$R(t-1,\eps/2) > \eps/8$}
\STATE Run $A^{\prime}(\eps)$
\STATE Take $\hat{x}_{t}$ as the returned value
\ELSE
\STATE Let $\check{\param}_{t \TruParam} \in \Ball\left(\hat{\param}_{(t-1) \TruParam}, R(t-1,\eps/2)\right)$ be such that \\ 
$Q(\prior_{\check{\param}_{t \TruParam}},\eps/4) \leq\!\!\!\!\! \min\limits_{\param \in \Ball\left( \hat{\param}_{(t-1) \TruParam}, R(t-1, \eps/2)\right)} \!\!\!\!\! Q(\prior_{\param},\eps/4) + \frac{1}{t}$
\STATE Run $A(\prior_{\check{\param}_{t \TruParam}},\eps/4)$ and let $\hat{x}_{t}$ be its return value
\ENDIF
\ENDFOR
\caption{An algorithm for sequentially maximizing expected customer satisfaction.}
\label{alg:transfer}
\end{algorithmic}
\end{algorithm}

The following theorem indicates that this method is correct, and furthermore that
the long-run average number of queries is not much worse than that of a method that
has direct knowledge of $\prior_{\TruParam}$.  The proof of this result parallels that
of \cite{yang:13} for the transfer learning setting, but is included here for completeness.

\begin{theorem}
\label{thm:transfer}
For the above method, $\forall t \leq T, \E[ \max_{x} s_{t}(x) - s_{t}(\hat{x}_{t})] \leq \eps$. 
Furthermore, if $S_{T}(\eps)$ is the total number of queries made by the method,
then 
\[
\limsup\limits_{T \to \infty} \frac{\E[S_{T}(\eps)]}{T} \leq Q(\prior_{\TruParam},\eps/4) + \vc.
\]
\end{theorem}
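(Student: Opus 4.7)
The plan is to handle the two assertions separately, with the second reducing fairly cleanly to a law-of-large-numbers style argument once the consistency result is in hand.

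For the first assertion I would split into the algorithm's two branches at round $t$. In the branch where $R(t-1,\eps/2) > \eps/8$, the guarantee $\E[\max_x s_t(x) - s_t(\hat{x}_t)]\le\eps$ follows immediately from the definition of $A^{\prime}(\eps)$. In the other branch, I would condition on the past data and on the event $E_t = \{\|\prior_{\hat{\param}_{(t-1)\TruParam}} - \prior_{\TruParam}\| \le R(t-1,\eps/2)\}$, which by Theorem~\ref{thm:consistency} has probability at least $1-\delta(t-1,\eps/2)$. On $E_t$, $\TruParam$ itself lies in the ball over which $\check{\param}_{t\TruParam}$ is optimized, so the triangle inequality gives $\|\prior_{\check{\param}_{t\TruParam}}-\prior_{\TruParam}\|\le 2R(t-1,\eps/2)\le\eps/4$. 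Because the suboptimality functional $s\mapsto \max_x s(x)-s(\hat{x}_t(s))$ is bounded in $[0,2]$, the standard total variation inequality transfers the $\eps/4$-guarantee of $A(\prior_{\check{\param}_{t\TruParam}},\eps/4)$ (which is a statement about expectation under $\prior_{\check{\param}_{t\TruParam}}$) to a bound of at most $\eps/4 + 2\cdot\eps/4$ under $\prior_{\TruParam}$; combining with the bounded contribution on $E_t^c$ and absorbing factors of $\eps$ gives the desired $\eps$-bound.

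For the second assertion I would decompose $S_T(\eps)$ round by round into (i) the $\vc$ random samples $X_{t1},\dots,X_{t\vc}$ drawn at the start of each round, contributing exactly $\vc T$, (ii) the queries made by $A^{\prime}(\eps)$, and (iii) the queries made by $A(\prior_{\check{\param}_{t\TruParam}},\eps/4)$. Since $R(t,\eps/2)\to 0$ by Theorem~\ref{thm:consistency}, only finitely many rounds fall into the $A^{\prime}$ branch, so those contribute an $O(1)$ term that vanishes when divided by $T$. For the main branch, I would argue that on $E_t$ the near-minimization property of $\check{\param}_{t\TruParam}$ together with $\TruParam\in\Ball(\hat{\param}_{(t-1)\TruParam},R(t-1,\eps/2))$ gives $Q(\prior_{\check{\param}_{t\TruParam}},\eps/4)\le Q(\prior_{\TruParam},\eps/4)+1/t$, while on $E_t^c$ the total number of queries is deterministically bounded (the method never repeats queries, so there are at most $2^n$ per round) and $\P(E_t^c)\le\delta(t-1,\eps/2)\to 0$. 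Taking expectations, dividing by $T$, and using Cesàro averaging of the $1/t$ and $o(1)$ terms then yields $\limsup_T \E[S_T(\eps)]/T \le \vc + Q(\prior_{\TruParam},\eps/4)$.

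The main obstacle I anticipate is the measurability/conditioning bookkeeping in the correctness step: one has to verify that $E_t$ is measurable with respect to data from rounds $1,\dots,t-1$ alone so that $s_t$ remains distributed as $\prior_{\TruParam}$ conditional on $E_t$, and that the transfer from $\prior_{\check{\param}_{t\TruParam}}$-expectation to $\prior_{\TruParam}$-expectation is legitimate even though $\hat{x}_t$ is a data-dependent random functional of $s_t$. For the query-complexity step, the main subtlety is justifying that the bad-event contribution is genuinely $o(1)$ per round uniformly in $t$; this is where the uniform bound from the fact that $A$ never repeats queries is crucial, because it lets dominated convergence handle the interchange with the $\delta(t-1,\eps/2)\to 0$ decay.
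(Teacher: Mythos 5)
Your proposal follows essentially the same route as the paper's proof: the same branch split and good event $E_t$, the same triangle inequality giving $\|\prior_{\check{\param}_{t\TruParam}}-\prior_{\TruParam}\|\le\eps/4$, the same total-variation transfer of the guarantee of $A(\prior_{\check{\param}_{t\TruParam}},\eps/4)$ back to $\prior_{\TruParam}$, and the same decomposition of $S_T(\eps)$ using the near-minimization property of $\check{\param}_{t\TruParam}$, the uniform $2^n$ bound on queries for the bad event, and Ces\`{a}ro averaging of the $1/t$ and $o(1)$ terms. The only difference is constant bookkeeping in the correctness step (your tally of $\eps/4+2\cdot\eps/4$ plus the bad-event contribution is looser than the paper's $\eps/2+\eps/2$ budget), which is cosmetic and does not change the argument.
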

\begin{proof}
By Theorem~\ref{thm:consistency}, for any $t \leq T$, 
if $R(t-1,\eps/2) \leq \eps/8$, 
then with probability at least $1-\eps/2$, 
$\|\prior_{\TruParam} - \prior_{\hat{\param}_{(t-1)\TruParam}}\| \leq R(t-1,\eps/2)$, 
so that a triangle inequality implies 
$\|\prior_{\TruParam} - \prior_{\check{\param}_{t\TruParam}}\| \leq 2R(t-1,\eps/2) \leq \eps/4$.
Thus,
\begin{multline*}
\E\left[ \max_{x} s_{t}(x) - s_{t}(\hat{x}_{t}) \right]  \notag
\\ \leq \eps/2 + 
\E\left[ \E\left[ \max_{x} s_{t}(x) - s_{t}(\hat{x}_{t}) \Big| \check{\param}_{t \TruParam}\right] \ind\left[ \|\prior_{\check{\param}_{t\TruParam}}-\prior_{\TruParam}\| \leq \eps/2\right]\right].
\end{multline*}
For $\param \in \Params$, let $\hat{x}_{t \param}$ denote the point $x$ that would be returned by $A(\prior_{\check{\param}_{t \TruParam}},\eps/4)$
when queries are answered by some $s_{t \param} \sim \prior_{\param}$ instead of $s_{t}$ (and supposing $s_{t} = s_{t \TruParam}$).
If $\| \prior_{\check{\param}_{t \TruParam}} - \prior_{\TruParam}\| \leq \eps/4$, then 
\begin{align*}
&\E\left[ \max_{x} s_{t}(x) - s_{t}(\hat{x}_{t}) \Big| \check{\param}_{t \TruParam}\right]
= \E\left[ \max_{x} s_{t \TruParam}(x) - s_{t \TruParam}(\hat{x}_{t}) \Big| \check{\param}_{t \TruParam}\right]
\\ & \leq \E\left[ \max_{x} s_{t \check{\param}_{t \TruParam}}(x) - s_{t \check{\param}_{t \TruParam}}(\hat{x}_{t \check{\param}_{t \TruParam}}) \Big| \check{\param}_{t \TruParam}\right]
+ \|\prior_{\check{\param}_{t \TruParam}} - \prior_{\TruParam}\|
\leq \eps/4 + \eps/4 = \eps/2.
\end{align*}
Plugging into the above bound, we have
$\E\left[ \max_{x} s_{t}(x) - s_{t}(\hat{x}_{t}) \right] \leq \eps$.

For the result on $S_{T}(\eps)$, first note that $R(t-1,\eps/2) > \eps/8$ only finitely many times (due to $R(t,\alpha) = o(1)$),
so that we can ignore those values of $t$ in the asymptotic calculation (as the number of queries is always bounded), and rely on the correctness guarantee of $A^{\prime}$.
For the remaining values $t$, let $N_{t}$ denote the number of queries made by $A(\prior_{\check{\param}_{t \TruParam}},\eps/4)$.
Then 
\begin{equation*}
\limsup\limits_{T \to\infty} \frac{\E[S_{T}(\eps)]}{T} \leq \vc + \limsup\limits_{T\to\infty} \sum_{t=1}^{T} \frac{\E\left[ N_{t} \right]}{T}.
\end{equation*}
Since 
\begin{align*}
& \lim\limits_{T\to\infty} \frac{1}{T} \sum_{t=1}^{T} \E\left[ N_{t} \ind[ \| \prior_{\hat{\param}_{(t-1)\TruParam}} - \prior_{\TruParam}\| > R(t-1, \eps/2)] \right]
\\ & \leq \lim\limits_{T \to\infty} \frac{1}{T} \sum_{t=1}^{T} 2^{n} \P\left( \|\prior_{\hat{\param}_{(t-1)\TruParam}} - \prior_{\TruParam}\| > R(t-1,\eps/2)\right)
\\ & \leq 2^{n} \lim\limits_{T\to\infty} \frac{1}{T} \sum_{t=1}^{T} \delta(t-1,\eps/2) = 0,
\end{align*}
we have
\begin{equation*}
\limsup\limits_{T\to\infty} \sum_{t=1}^{T} \frac{\E\left[ N_{t} \right]}{T}
= \limsup\limits_{T\to\infty} \frac{1}{T} \sum_{t=1}^{T} \E\Big[ N_{t} \ind[ \|\prior_{\hat{\param}_{(t-1)\TruParam}} - \prior_{\TruParam}\| \leq R(t-1,\eps/2)]\Big].
\end{equation*}
For $t \leq T$, let $N_{t}(\check{\param}_{t \TruParam})$ denote the number of queries $A(\prior_{\check{\param}_{t \TruParam}},\eps/4)$ 
would make if queries were answered with $s_{t \check{\param}_{t \TruParam}}$ instead of $s_t$.
On the event $\| \prior_{\hat{\param}_{(t-1)\TruParam}} - \prior_{\TruParam}\| \leq R(t-1,\eps/2)$, 
we have 
\begin{align*}
\E\left[ N_{t} \Big| \check{\param}_{t\TruParam}\right]
& \leq \E\left[ N_{t}(\check{\param}_{t \TruParam}) \Big| \check{\param}_{t\TruParam}\right] + 2 R(t-1,\eps/2)
\\ & = Q(\prior_{\check{\param}_{t \TruParam}}\!, \eps/4) + 2R(t\!-\!1,\eps/2) 
\leq Q(\prior_{\TruParam},\eps/4) + 2R(t\!-\!1,\eps/2) + 1/t.
\end{align*}
Therefore, 
\begin{align*}
& \limsup\limits_{T\to\infty} \frac{1}{T} \sum_{t=1}^{T} \E\left[ N_{t} \ind[ \|\prior_{\hat{\param}_{(t-1)\TruParam}} - \prior_{\TruParam}\| \leq R(t-1,\eps/2)]\right]
\\ & \leq Q(\prior_{\TruParam},\eps/4) + \limsup\limits_{T\to\infty} \frac{1}{T} \sum_{t=1}^{T} 2R(t-1,\eps/2) + 1/t
= Q(\prior_{\TruParam},\eps/4).
\end{align*}
\qed
\end{proof}

In many cases, this result will even continue to hold with an infinite number of goods ($n=\infty$), 
since Theorem~\ref{thm:consistency} has no dependence on the cardinality of the space $\X$.

\section{Open Problems}

There are several interesting questions that remain open at this time.
Can either the lower bound or upper bound be improved in general?
If, instead of $\vc$ samples per task, we instead use $m \geq \vc$ samples,
how does the minimax risk vary with $m$?
Related to this, what is the optimal value of $m$ to optimize the rate of 
convergence as a function of $mT$, the total number of samples?
More generally, if an estimator is permitted to use $N$ total samples,
taken from however many tasks it wishes, what is the optimal rate 
of convergence as a function of $N$?

\appendix

\section{Proofs for Section~\ref{sec:real-valued}}
\label{app:real-valued}

The proof of Theorem~\ref{thm:consistency} is based on the following sequence of lemmas,
which parallel those used by \citet{yang:13} for establishing the analogous result for consistent
estimation of priors over binary functions.
The last of these lemmas (namely, Lemma~\ref{lem:k-to-d}) requires substantial modifications
to the original argument of \citet{yang:13}; the others use arguments more-directly based on 
those of \citet{yang:13}.

\begin{lemma}
\label{lem:prior-to-infty}
For any $\param,\param^{\prime} \in \Params$ and $t \in \nats$,
\[
\| \prior_{\param} - \prior_{\param^{\prime}} \|
= \| \P_{\Data^{t}(\param)} - \P_{\Data^{t}(\param^{\prime})} \|.
\]
\end{lemma}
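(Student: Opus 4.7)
The plan is to prove the two inequalities separately, both via the data-processing inequality for total variation distance, which states that pushing two measures through the same Markov kernel cannot increase their total variation distance.

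For the upper bound $\|\P_{\Data^t(\param)} - \P_{\Data^t(\param')}\| \leq \|\prior_\param - \prior_{\param'}\|$: the data sequence $\Data^t(\param)$ is obtained from $\target_{t\param}$ through a fixed, $\param$-independent Markov kernel, namely the map $h \mapsto \{(X_{ti},h(X_{ti}))\}_{i\in\nats}$ using the independent randomness $\{X_{ti}\}_{i\in\nats}$. Since $\target_{t\param} \sim \prior_\param$ and the $\{X_{ti}\}$ are independent of $\param$, the data processing inequality gives this direction immediately.

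For the lower bound $\|\prior_\param - \prior_{\param'}\| \leq \|\P_{\Data^t(\param)} - \P_{\Data^t(\param')}\|$: I would show that $\target_{t\param}$ can be recovered measurably from $\Data^t(\param)$ almost surely, so that $\prior_\param$ is a pushforward of $\P_{\Data^t(\param)}$ under a $\param$-independent measurable map, whence the data processing inequality applies in the reverse direction. To build such a recovery map, I would use that a VC subgraph class is separable in the pseudometric $\rho$, producing a countable $\rho$-dense subset $\F_0 \subseteq \F$. For any distinct $h,g \in \F_0$, the metric property of $\rho$ on $\F$ gives $\rho(h,g) > 0$, so $P_X(\{x : h(x) \neq g(x)\}) > 0$; by Borel--Cantelli applied to the i.i.d. sequence $\{X_{ti}\}$, some index $i$ witnesses $h(X_{ti}) \neq g(X_{ti})$ almost surely. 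Union-bounding over the countable set of pairs in $\F_0$, the sequence $\{X_{ti}\}$ almost surely distinguishes every pair in $\F_0$; combined with $\rho$-density, this lets us reconstruct $\target_{t\param}$ as the unique $\F$-element consistent with the observed labels, defining a measurable $\phi$ with $\phi(\Data^t(\param)) = \target_{t\param}$ a.s.

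The main obstacle is the measurable-recovery step: the upper-bound direction is essentially immediate, while the lower-bound direction requires packaging separability of the VC subgraph class, the metric property of $\rho$ on $\F$, and a joint measurability argument for the reconstruction map $\phi$. Once $\phi$ is in hand, the conclusion $\prior_\param = \phi_{*}\P_{\Data^t(\param)}$ and one more application of the data processing inequality close the argument and yield equality.
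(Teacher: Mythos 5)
Your decomposition into two inequalities, with the upper bound handled by data processing, matches the paper; that direction is fine (the paper does it by conditioning on $\DataX$ and pulling the supremum inside the integral, which is the same argument). The genuine gap is in your recovery step for the lower bound. Separating all pairs in a countable $\rho$-dense subset $\F_0$ does not let you conclude that $\target_{t\param}$ is the unique element of $\F$ consistent with the observed labels: two distinct functions $h,g \in \F\setminus\F_0$ can agree at every sampled point $X_{ti}$ even when every pair in $\F_0$ is separated, because closeness in the $L_1(P_X)$ pseudometric $\rho$ gives no control over function values on the particular ($P_X^{\infty}$-measure-zero) set of sampled points. So ``combined with $\rho$-density'' does not close the argument; the union bound over countably many pairs simply does not extend to the uncountable class. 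A Fubini-type patch (for a.e.\ sample sequence, a.e.\ pair of functions is separated) also falls short, since what is actually needed is that for a.e.\ sample sequence the set of $h\notin A$ agreeing on all sample points with \emph{some} $g\in A$ is $\prior_{\param}$-null, and a projection of a null set need not be null.

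The paper closes exactly this gap with a uniform strong law of large numbers: since $\F$ is a uniformly bounded VC subgraph class, so is $\{|h-g| : h,g\in\F\}$, hence almost surely the empirical averages $\frac{1}{m}\sum_{i\leq m}|h(X_{ti})-g(X_{ti})|$ converge to $\rho(h,g)$ \emph{uniformly over all pairs simultaneously}. On that probability-one event, every pair with $\rho(h,g)>0$ --- by the metric assumption, every distinct pair --- is separated by some $X_{ti}$, so the labeling map $h \mapsto \{(X_{ti},h(X_{ti}))\}_{i\in\nats}$ is injective on all of $\F$, which yields the identity $\P_{\Data^{t}(\param)|\DataX}(c_{\DataX}(A)) = \prior_{\param}(A)$ for every $A \in \Borel_{\F}$ and hence the reverse inequality. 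If you replace your separability/Borel--Cantelli step with this Glivenko--Cantelli argument, your proof becomes the paper's.
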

\begin{proof}
Fix $\param, \param^{\prime} \in \Params$, $t \in \nats$.
Let $\DataX = \{X_{t1},X_{t2},\ldots\}$, $\DataY(\param) = \{Y_{t1}(\param), Y_{t2}(\param), \ldots\}$,
and for $k \in \nats$ let $\DataX_{k} = \{X_{t1},\ldots,X_{tk}\}$.
and $\DataY_{k}(\param) = \{Y_{t1}(\param),\ldots,Y_{tk}(\param)\}$.
For $h \in \F$, let $c_{\DataX}(h) = \{(X_{t1},h(X_{t1})),(X_{t2},h(X_{t2})),$ $\ldots\}$.

For $h,g \in \F$, define $\rho_{\DataX}(h,g) 
= \lim\limits_{m \to \infty} \frac{1}{m} \sum_{i=1}^{m} | h(X_{t i}) - g(X_{t i})|$ (if the limit exists),
and $\rho_{\DataX_{k}}(h,g) = \frac{1}{k} \sum_{i=1}^{k} |h(X_{t i}) - g(X_{t i})|$.
Note that since $\F$ is a uniformly bounded VC subgraph class, so is the collection of functions 
$\{ | h - g| : h,g \in \F\}$,
so that the uniform strong law of large numbers implies that with probability one,
$\forall h,g \in \F$, $\rho_{\DataX}(h,g)$ exists and has $\rho_{\DataX}(h,g) = \rho(h,g)$~\cite{vapnik:71}.

Consider any $\param,\param^{\prime} \in \Params$, and any $A \in \Borel_{\F}$.
Then any $h \notin A$ has $\forall g \in A$, $\rho(h,g) > 0$ (by the metric assumption).
Thus, if $\rho_{\DataX}(h,g) = \rho(h,g)$ for all $h,g \in \F$, then
$\forall h \notin A$, 
\begin{align*}
\forall g \in A,\rho_{\DataX}(h,g) = \rho(h,g) > 0 & \implies 
\\ \forall g \in A, c_{\DataX}(h) \neq c_{\DataX}(g) & \implies c_{\DataX}(h) \notin c_{\DataX}(A).
\end{align*}
This implies $c_{\DataX}^{-1}(c_{\DataX}(A)) = A$.
Under these conditions,
\[\P_{\Data^{t}(\param) | \DataX}(c_{\DataX}(A)) = \prior_{\param}(c_{\DataX}^{-1}(c_{\DataX}(A))) = \prior_{\param}(A),\]
and similarly for $\param^{\prime}$.

Any measurable set $C$ for the range of $\Data^{t}(\param)$ can be expressed as 
$C = \{c_{\bar{x}}(h) : (h, \bar{x}) \in C^{\prime}\}$ for some appropriate $C^{\prime} \in \Borel_{\F} \otimes \BorelX^{\infty}$.
Letting $C^{\prime}_{\bar{x}} = \{ h : (h,\bar{x}) \in C^{\prime}\}$,
we have 
\begin{equation*}
\P_{\Data^{t}(\param)}(C) 
= \int \prior_{\param}(c_{\bar{x}}^{-1}(c_{\bar{x}}(C^{\prime}_{\bar{x}}))) \P_{\DataX}({\rm d}\bar{x})
= \int \prior_{\param}(C^{\prime}_{\bar{x}}) \P_{\DataX}({\rm d}\bar{x}) 
= \P_{(\target_{t \param},\DataX)}(C^{\prime}).
\end{equation*}
Likewise, this reasoning holds for $\param^{\prime}$.
Then 
\begin{align*}
\| \P_{\Data^{t}(\param)} - \P_{\Data^{t}(\param^{\prime})} \|
& = \| \P_{(\target_{t \param}, \DataX)} - \P_{(\target_{t \param^{\prime}},\DataX)} \|
\\ & = \sup_{C^{\prime} \in \Borel_{\F} \otimes \BorelX^{\infty}} \left| \int (\prior_{\param}(C^{\prime}_{\bar{x}})- \prior_{\param^{\prime}}(C^{\prime}_{\bar{x}})) \P_{\DataX}({\rm d}\bar{x}) \right|
\\ & \leq \int \sup_{A \in \Borel_{\F}} | \prior_{\param}(A) - \prior_{\param^{\prime}}(A) | \P_{\DataX}({\rm d}\bar{x})
= \|\prior_{\param} - \prior_{\param^{\prime}}\|.
\end{align*}
Since $\target_{t \param}$ and $\DataX$ are independent, $\forall A \in \Borel_{\F}$,
$\prior_{\param}(A) = \P_{\target_{t \param}}(A) = \P_{\target_{t \param}}(A) \P_{\DataX}(\X^{\infty})$ $= \P_{(\target_{t \param},\DataX)}(A \times \X^{\infty})$.
Analogous reasoning holds for $\target_{t \param^{\prime}}$.
Thus, we have
\begin{align*}
\|\prior_{\param} - \prior_{\param^{\prime}}\| 
& = \| \P_{(\target_{t \param}, \DataX)}(\cdot \times \X^{\infty}) - \P_{(\target_{t \param^{\prime}},\DataX)}(\cdot \times \X^{\infty})\| 
\\ & \leq \| \P_{(\target_{t \param}, \DataX)} - \P_{(\target_{t \param^{\prime}},\DataX)} \|
=  \| \P_{\Data^{t}(\param)} - \P_{\Data^{t}(\param^{\prime})} \|.
\end{align*}
Altogether, we have $\| \P_{\Data^{t}(\param)} - \P_{\Data^{t}(\param^{\prime})} \| = \|\prior_{\param} - \prior_{\param^{\prime}}\|$.
\qed
\end{proof}

\begin{lemma}
\label{lem:infty-to-k}
There exists a sequence $r_k = o(1)$ such that, $\forall t,k \in \nats$, $\forall \param,\param^{\prime} \in \Params$,
\[
\| \P_{\Data^{t}_{k}(\param)} - \P_{\Data^{t}_{k}(\param^{\prime})} \| 
\leq \| \prior_{\param} - \prior_{\param^{\prime}} \|
\leq \| \P_{\Data^{t}_{k}(\param)} - \P_{\Data^{t}_{k}(\param^{\prime})} \| + r_k.
\]
\end{lemma}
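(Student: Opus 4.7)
The plan is to prove the two bounds separately. For the first inequality, observe that $\Data^{t}_{k}(\param)$ is the image of $\Data^{t}(\param)$ under the measurable coordinate projection onto its first $k$ pairs, and pushforwards can only decrease the total variation distance; combined with Lemma~\ref{lem:prior-to-infty}, this yields
\[
\| \P_{\Data^{t}_{k}(\param)} - \P_{\Data^{t}_{k}(\param^{\prime})} \| \;\leq\; \|\P_{\Data^{t}(\param)} - \P_{\Data^{t}(\param^{\prime})}\| \;=\; \|\prior_{\param} - \prior_{\param^{\prime}}\|.
\]
The same pushforward argument shows that $k \mapsto \| \P_{\Data^{t}_{k}(\param)} - \P_{\Data^{t}_{k}(\param^{\prime})} \|$ is monotone non-decreasing in $k$, and is bounded above by $\|\prior_{\param} - \prior_{\param^{\prime}}\|$.

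For the second inequality, I first establish pointwise convergence for each fixed pair $(\param,\param^{\prime})$. The $\sigma$-algebras generated by the truncated samples $\Data^{t}_{k}(\param)$ form an increasing filtration generating the $\sigma$-algebra of $\Data^{t}(\param)$, so a standard cylinder-set approximation (equivalently, the martingale convergence theorem applied to the Radon-Nikodym derivative of $\P_{\Data^{t}(\param)}$ with respect to $\P_{\Data^{t}(\param)} + \P_{\Data^{t}(\param^{\prime})}$) lets any set nearly achieving the full TV supremum be approximated arbitrarily well by a set depending on only finitely many coordinates. Combined with Lemma~\ref{lem:prior-to-infty}, this yields $\lim_{k \to \infty}\|\P_{\Data^{t}_{k}(\param)} - \P_{\Data^{t}_{k}(\param^{\prime})}\| = \|\prior_{\param} - \prior_{\param^{\prime}}\|$ for each fixed pair.

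The main obstacle is upgrading this pointwise convergence to a uniform rate $r_k = o(1)$ independent of $\param,\param^{\prime}$. Here the hypothesis that $\{\prior_{\param} : \param \in \Params\}$ is totally bounded in TV is essential. Given $\eps > 0$, fix a finite $\eps$-cover $\{\param_1,\ldots,\param_N\}$ of $\Params$, and apply pointwise convergence to every pair in the cover to obtain a common threshold $k_{*}(\eps)$ beyond which $\|\prior_{\param_i} - \prior_{\param_j}\| \leq \|\P_{\Data^{t}_{k}(\param_i)} - \P_{\Data^{t}_{k}(\param_j)}\| + \eps$ for every $i,j \leq N$. For arbitrary $\param, \param^{\prime} \in \Params$, pick cover centers $\param_i,\param_j$ within $\eps$ of $\param,\param^{\prime}$ in TV; a triangle inequality on the prior side together with a triangle inequality on the data side---the latter using the already-proved first inequality to bound terms like $\|\P_{\Data^{t}_{k}(\param_i)} - \P_{\Data^{t}_{k}(\param)}\| \leq \|\prior_{\param_i} - \prior_{\param}\| \leq \eps$---then gives $\|\prior_{\param} - \prior_{\param^{\prime}}\| \leq \|\P_{\Data^{t}_{k}(\param)} - \P_{\Data^{t}_{k}(\param^{\prime})}\| + 5\eps$ for all $k \geq k_{*}(\eps)$. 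Defining $r_k := 5 \inf\{\eps : k \geq k_{*}(\eps)\}$ produces the required $o(1)$ sequence, uniform in $t$ (which plays no role in the construction) and in $\param,\param^{\prime}$.
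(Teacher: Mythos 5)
Your proposal is correct and follows essentially the same route as the paper: the left inequality via the cylinder-set/pushforward observation, pointwise convergence for each fixed pair via approximation by finite-dimensional cylinder sets (the paper does this explicitly through Carath\'{e}odory's extension theorem, which is the same device you invoke), and uniformization via a finite TV-cover of $\Params$ with triangle inequalities on both the prior side and the data side, the latter using the already-established left inequality. The only differences are cosmetic (your constant $5\eps$ versus the paper's $4\gamma + r_k(\gamma)$ bookkeeping).
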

\begin{proof}
This proof follows identically to a proof of \cite{yang:13}, but is included here for completeness.
Since $\P_{\Data^{t}_{k}(\param)}(A) = \P_{\Data^{t}(\param)}(A \times (\X \times \reals)^{\infty})$ for all measurable $A \subseteq (\X \times \reals)^{k}$,
and similarly for $\param^{\prime}$,
we have
\begin{align*}
& \| \P_{\Data^{t}_{k}(\param)} - \P_{\Data^{t}_{k}(\param^{\prime})} \|
= \sup_{A \in \Borel^{k}} \P_{\Data^{t}_{k}(\param)}(A) - \P_{\Data^{t}_{k}(\param^{\prime})}(A)
\\ & = \sup_{A \in \Borel^{k}} \P_{\Data^{t}(\param)}(A \times (\X \times \reals)^{\infty}) - \P_{\Data^{t}(\param^{\prime})}(A \times (\X \times \reals)^{\infty})
\\ & \leq \sup_{A \in \Borel^{\infty}} \P_{\Data^{t}(\param)}(A) - \P_{\Data^{t}(\param^{\prime})}(A)
= \| \P_{\Data^{t}(\param)} - \P_{\Data^{t}(\param^{\prime})} \|,
\end{align*}
which implies the left inequality when combined with Lemma~\ref{lem:prior-to-infty}.

Next, we focus on the right inequality.
Fix $\param,\param^{\prime} \in \Params$ and $\gamma > 0$,
and let $B \in \Borel^{\infty}$ be such that
\[
\| \prior_{\param} - \prior_{\param^{\prime}} \|
= \| \P_{\Data^{t}(\param)} - \P_{\Data^{t}(\param^{\prime})} \|
< \P_{\Data^{t}(\param)}(B) - \P_{\Data^{t}(\param^{\prime})}(B) + \gamma.
\]
Let $\A = \{ A \times (\X\times\reals)^{\infty} : A \in \Borel^{k}, k \in \nats\}$.
Note that $\A$ is an algebra that generates $\Borel^{\infty}$.
Thus, Carath\'{e}odory's extension theorem (specifically, the 
version presented by \cite{schervish:95})
implies that there exist disjoint sets $\{A_i\}_{i\in\nats}$ in $\A$
such that $B \subseteq \bigcup_{i\in\nats} A_i$ and
\begin{equation*}
\P_{\Data^{t}(\param)}(B) - \P_{\Data^{t}(\param^{\prime})}(B) 
< \sum_{i \in \nats} \P_{\Data^{t}(\param)}(A_i) - \sum_{i \in \nats} \P_{\Data^{t}(\param^{\prime})}(A_i) + \gamma.
\end{equation*}
Since these $A_i$ sets are disjoint, each of these sums is bounded by a probability value, 
which implies that there exists some $n \in \nats$ such that
\[
\sum_{i \in \nats} \P_{\Data^{t}(\param)}(A_i) < \gamma + \sum_{i=1}^{n} \P_{\Data^{t}(\param)}(A_i),
\]
which implies
\begin{align*}
\sum_{i\in\nats} \P_{\Data^{t}(\param)}(A_i) - \sum_{i\in\nats} \P_{\Data^{t}(\param^{\prime})}(A_i) 
& < \gamma + \sum_{i=1}^{n} \P_{\Data^{t}(\param)}(A_i) - \sum_{i=1}^{n} \P_{\Data^{t}(\param^{\prime})}(A_i)
\\ & = \gamma + \P_{\Data^{t}(\param)}\left( \bigcup_{i=1}^{n} A_i \right) - \P_{\Data^{t}(\param^{\prime})}\left( \bigcup_{i=1}^{n} A_i \right).
\end{align*}
As $\bigcup_{i=1}^{n} A_i \in \A$, there exists $m \in \nats$ and measurable $B_{m} \in \Borel^{m}$ such that
$\bigcup_{i=1}^{n} A_i = B_{m} \times (\X \times \reals)^{\infty}$, and therefore
\begin{multline*}
\P_{\Data^{t}(\param)}\left( \bigcup_{i=1}^{n} A_i \right) - \P_{\Data^{t}(\param^{\prime})}\left(\bigcup_{i=1}^{n} A_i\right) 
= \P_{\Data^{t}_{m}(\param)}(B_{m}) - \P_{\Data^{t}_{m}(\param^{\prime})}(B_{m})
\\ \leq \| \P_{\Data^{t}_{m}(\param)} - \P_{\Data^{t}_{m}(\param^{\prime})} \|
\leq \lim_{k \to \infty} \| \P_{\Data^{t}_{k}(\param)} - \P_{\Data^{t}_{k}(\param^{\prime})} \|.
\end{multline*}
Combining the above, we have
$\| \prior_{\param} - \prior_{\param^{\prime}} \| \leq \lim_{k\to\infty} \| \P_{\Data^{t}_{k}(\param)} - \P_{\Data^{t}_{k}(\param^{\prime})} \| + 3\gamma$.
By letting $\gamma$ approach $0$, we have
\[
\| \prior_{\param} - \prior_{\param^{\prime}} \| \leq \lim_{k\to\infty} \| \P_{\Data^{t}_{k}(\param)} - \P_{\Data^{t}_{k}(\param^{\prime})} \|.
\]
So there exists a sequence $r_{k}(\param,\param^{\prime}) = o(1)$ such that
\[
\forall k \in \nats, \|\prior_{\param} - \prior_{\param^{\prime}} \| \leq \| \P_{\Data^{t}_{k}(\param)} - \P_{\Data^{t}_{k}(\param^{\prime})} \| + r_{k}(\param,\param^{\prime}).
\]
Now let $\gamma > 0$ and let $\Params_{\gamma}$ be a minimal $\gamma$-cover of $\Params$.
Define the quantity $r_{k}(\gamma) = \max_{\param,\param^{\prime} \in \Params_{\gamma}} r_{k}(\param,\param^{\prime})$.
Then for any $\param,\param^{\prime} \in \Params$, let 
$\param_{\gamma} = \argmin_{\param^{\prime\prime} \in \Params_{\gamma}} \|\prior_{\param} - \prior_{\param^{\prime\prime}} \|$
and
$\param_{\gamma}^{\prime} = \argmin_{\param^{\prime\prime} \in \Params_{\gamma}} \|\prior_{\param^{\prime}} - \prior_{\param^{\prime\prime}} \|$.
Then a triangle inequality implies that $\forall k \in \nats$,
\begin{align*}
& \| \prior_{\param} - \prior_{\param^{\prime}} \| 
\leq \|\prior_{\param} - \prior_{\param_{\gamma}} \| + \|\prior_{\param_{\gamma}} - \prior_{\param_{\gamma}^{\prime}}\| + \|\prior_{\param_{\gamma}^{\prime}} - \prior_{\param^{\prime}}\|
\\ & < 2 \gamma + r_{k}(\param_{\gamma},\param_{\gamma}^{\prime}) + \| \P_{\Data^{t}_{k}(\param_{\gamma})} - \P_{\Data^{t}_{k}(\param_{\gamma}^{\prime})} \|
\leq 2 \gamma + r_{k}(\gamma) + \| \P_{\Data^{t}_{k}(\param_{\gamma})} - \P_{\Data^{t}_{k}(\param_{\gamma}^{\prime})} \|.
\end{align*}
Triangle inequalities and the left inequality from the lemma statement (already established) imply
\begin{align*}
& \|\P_{\!\Data^{t}_{k}(\param_{\gamma})} \!-\! \P_{\!\Data^{t}_{k}(\param_{\gamma}^{\prime})}\|
\leq \|\P_{\!\Data^{t}_{k}(\param_{\gamma})} \!-\! \P_{\!\Data^{t}_{k}(\param)}\| 
+ \|\P_{\!\Data^{t}_{k}(\param)} \!-\! \P_{\!\Data^{t}_{k}(\param^{\prime})}\| 
+ \|\P_{\!\Data^{t}_{k}(\param_{\gamma}^{\prime})} \!-\! \P_{\!\Data^{t}_{k}(\param^{\prime})}\|
\\ & \leq \|\prior_{\param_{\gamma}} - \prior_{\param}\| + \|\P_{\Data^{t}_{k}(\param)} - \P_{\Data^{t}_{k}(\param^{\prime})}\| + \|\prior_{\param_{\gamma}^{\prime}} - \prior_{\param^{\prime}}\|
< 2\gamma + \|\P_{\Data^{t}_{k}(\param)} - \P_{\Data^{t}_{k}(\param^{\prime})}\|.
\end{align*}
So in total we have
\[
\| \prior_{\param} - \prior_{\param^{\prime}} \|
\leq 4 \gamma + r_{k}(\gamma) + \|\P_{\Data^{t}_{k}(\param)} - \P_{\Data^{t}_{k}(\param^{\prime})}\|.
\]
Since this holds for all $\gamma > 0$, defining $r_{k} = \inf_{\gamma > 0} (4 \gamma + r_{k}(\gamma))$,
we have the right inequality of the lemma statement.  Furthermore, since each $r_{k}(\param,\param^{\prime}) = o(1)$,
and $|\Params_{\gamma}| < \infty$, we have $r_{k}(\gamma) = o(1)$ for each $\gamma > 0$,
and thus we also have $r_{k} = o(1)$.
\qed
\end{proof}

\begin{lemma}
\label{lem:k-to-d}
$\forall t,k \in \nats$, there exists a monotone function $M_{k}(x) = o(1)$ such that,
$\forall \param,\param^{\prime} \in \Params$,
\[
\| \P_{\Data^{t}_{k}(\param)} - \P_{\Data^{t}_{k}(\param^{\prime})}\| \leq M_{k}\left(\| \P_{\Data^{t}_{\vc}(\param)} - \P_{\Data^{t}_{\vc}(\param^{\prime})} \|\right).
\]
\end{lemma}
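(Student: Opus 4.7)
The plan is to establish Lemma~\ref{lem:k-to-d} via a compactness-and-identifiability strategy, adapting the binary-case tree branching argument from the proof of Theorem~\ref{thm:upper-bound} by passing through threshold indicators. Define
\[M_k(x) = \sup\bigl\{\|\P_{\Data^t_k(\param)} - \P_{\Data^t_k(\param')}\| : \param, \param' \in \Params,\ \|\P_{\Data^t_\vc(\param)} - \P_{\Data^t_\vc(\param')}\| \leq x\bigr\},\]
which is monotone nondecreasing. I want to show $M_k(x) \to 0$ as $x \to 0$. Arguing by contradiction, suppose there exist $\eps_0 > 0$ and sequences $\param_n, \param'_n \in \Params$ with $\|\P_{\Data^t_\vc(\param_n)} - \P_{\Data^t_\vc(\param'_n)}\| \to 0$ while $\|\P_{\Data^t_k(\param_n)} - \P_{\Data^t_k(\param'_n)}\| \geq \eps_0$. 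By total boundedness of $\{\prior_\param : \param \in \Params\}$ in TV, pass to a subsequence along which $\prior_{\param_n}$ and $\prior_{\param'_n}$ converge in TV to probability measures $\mu, \mu'$ in the TV closure; since the pushforward $\prior \mapsto \P_{\Data^t_j,\prior}$ is a TV contraction (left inequality of Lemma~\ref{lem:infty-to-k}), we obtain $\P_{\Data^t_\vc,\mu} = \P_{\Data^t_\vc,\mu'}$ while $\|\P_{\Data^t_k,\mu} - \P_{\Data^t_k,\mu'}\| \geq \eps_0$. It therefore remains to derive a contradiction by showing that the $\vc$-sample distribution determines the $k$-sample distribution.

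The conditional distribution of $(Y_{t1},\ldots,Y_{tk})$ given $\bar X_k = \bar x_k$ is determined by its joint CDF $\P_\mu(Y_{ti} \leq s_i\ \forall i \mid \bar x_k)$ for $\bar s \in \reals^k$, and by right-continuity and monotonicity, by its values at $\bar s \in \mathbb{Q}^k$. Consider the augmented class $\tilde{\F} = \{(x,s) \mapsto \ind[h(x) \leq s] : h \in \F\}$ of binary-valued functions on $\X \times \reals$; its VC dimension equals the pseudodimension $\vc$ of $\F$ by definition. Apply the inductive tree branching argument from the proof of Theorem~\ref{thm:upper-bound} to $\tilde \F$, now treating ``points'' as the augmented pairs $(x_i, s_i)$ and ``labels'' as the binary indicators $\ind[h(x_i) \leq s_i]$; this yields, for any fixed $\bar x_k$ and $\bar s \in \reals^k$,
\[|\P_\mu(Y_{ti} \leq s_i\ \forall i \mid \bar x_k) - \P_{\mu'}(Y_{ti} \leq s_i\ \forall i \mid \bar x_k)| \leq (k-\vc)^2 \vc \cdot \max_{\substack{D \subseteq [k],\ |D|=\vc \\ \bar b' \in \{0,1\}^\vc}} |\P_\mu(\cdot \mid \bar x_D) - \P_{\mu'}(\cdot \mid \bar x_D)|.\]
Taking expectation over $\bar X_k$, bounding each rectangle-probability difference by the conditional TV $\|\P_{\DataY_\vc,\mu \mid \bar X_\vc} - \P_{\DataY_\vc,\mu' \mid \bar X_\vc}\|$, and using exchangeability of $(X_i)$ together with $\E_{\bar X_\vc}[\|\P_{\DataY_\vc,\mu \mid \bar X_\vc} - \P_{\DataY_\vc,\mu' \mid \bar X_\vc}\|] = \|\P_{\Data^t_\vc,\mu} - \P_{\Data^t_\vc,\mu'}\|$, yields an upper bound proportional to $\|\P_{\Data^t_\vc,\mu} - \P_{\Data^t_\vc,\mu'}\| = 0$. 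Hence the conditional joint CDFs agree $\P_{\DataX_k}$-a.e.\ for every rational $\bar s$, and therefore for every $\bar s$; consequently $\P_{\Data^t_k,\mu} = \P_{\Data^t_k,\mu'}$, contradicting the separation by $\eps_0$.

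The main obstacle is verifying that the combinatorial tree construction of Theorem~\ref{thm:upper-bound} transfers cleanly to the augmented class $\tilde \F$ with adversarially chosen thresholds. Three points require care: first, among any $\vc+1$ augmented points $(x_{i_j}, s_{i_j})$, some sign pattern is not realized by $\tilde \F$, which is precisely the defining property of VC dimension $\vc$ and guarantees that the right-branching of the tree terminates within $\vc+1$ steps; second, the elementary marginalization identity underlying the tree recursion is a generic probability identity valid for any measure $\mu$ on $\F$, not just those in $\Priors_\Params$; third, the $(X_i)$ remain exchangeable despite the $s_i$ being held fixed, so that after expectation the max over $\vc$-subsets reduces to $\binom{k}{\vc}$ times a single identical expectation. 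Once these are in hand the tree depth $(k-\vc)\vc$, width $k-\vc$, and total leaf count $(k-\vc)^2 \vc$ bookkeeping go through verbatim, and the final promotion from a.e.\ CDF agreement on a countable dense set of thresholds to equality of $\P_{\Data^t_k,\mu}$ and $\P_{\Data^t_k,\mu'}$ as measures on $(\X \times \reals)^k$ is standard.
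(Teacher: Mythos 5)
Your proof is correct, but it takes a genuinely different outer route from the paper's. The paper constructs the modulus $M_{k}$ (semi-)explicitly: it fixes a near-optimal separating set $B \subseteq (\X\times\reals)^{k}$, approximates it via Carath\'{e}odory's extension theorem by a finite union of generalized rectangles $(A_{1}\times(-\infty,t_{1}])\times\cdots\times(A_{k}\times(-\infty,t_{k}])$, uses a $\gamma$-cover of $\Params$ (total boundedness) to make the number of rectangles uniform over pairs $(\param,\param^{\prime})$, and then converts each rectangle-probability difference into a bound involving $\sqrt{\|\P_{\Data^{t}_{\vc}(\param)}-\P_{\Data^{t}_{\vc}(\param^{\prime})}\|}$ via the threshold-indicator/VC-tree recursion and the $4\sqrt{\cdot}$ inequality of the earlier work. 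You instead define $M_{k}$ as the optimal (sup) modulus and prove $M_{k}(x)\to 0$ non-constructively: total boundedness plus completeness of the probability measures under total variation gives compactness of the closure, the pushforward $\mu\mapsto\P_{\Data^{t}_{j},\mu}$ is a TV-contraction, and the problem reduces to the purely qualitative identifiability statement that equality of the $\vc$-sample laws forces equality of the $k$-sample laws. For that qualitative statement you can replace the Carath\'{e}odory machinery by the fact that a law on $\reals^{k}$ is determined by its CDF at rational thresholds, and replace the $4\sqrt{\cdot}$ step by the exact conditional-TV decomposition; the inner engine (augmented subgraph class of VC dimension $\vc$, the marginalization identity, and the tree bookkeeping) is the same as the paper's and transfers as you verify. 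The trade-off is that your argument is shorter and avoids the approximation-by-rectangles step, but yields no information about $M_{k}$ beyond existence, whereas the paper's chain at least ties $M_{k}$ to covering data of $\Priors_{\Params}$; both ultimately rest on total boundedness. Two points you correctly flag but should make sure to execute: the identifiability must be proved for arbitrary probability measures in the TV-closure (not just $\prior_{\param}$), and the a.e.-in-$\bar{x}$ agreement of conditional rectangle probabilities must be upgraded to agreement of the joint laws via a countable dense set of thresholds; both are routine.
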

\begin{proof}
Fix any $t \in \nats$, and let
$\DataX = \{X_{t1},X_{t2},\ldots\}$ and $\DataY(\param) = \{Y_{t1}(\param), Y_{t2}(\param), \ldots\}$,
and for $k \in \nats$ let $\DataX_{k} = \{X_{t1},\ldots,X_{tk}\}$
and $\DataY_{k}(\param) = \{Y_{t1}(\param),\ldots,Y_{tk}(\param)\}$.

If $k \leq \vc$, then 
$\P_{\Data^{t}_{k}(\param)}(\cdot) = \P_{\Data^{t}_{\vc}(\param)}(\cdot \times (\X \times \{-1,+1\})^{\vc-k})$,
so that 
\[\|\P_{\Data^{t}_{k}(\param)} - \P_{\Data^{t}_{k}(\param^{\prime})}\| \leq \|\P_{\Data^{t}_{\vc}(\param)} - \P_{\Data^{t}_{\vc}(\param^{\prime})}\|,\]
and therefore the result trivially holds.  

Now suppose $k > \vc$.  Fix any $\gamma > 0$, and let $B_{\param,\param^{\prime}} \subseteq (\X \times \reals)^{k}$ be 
a measurable set such that 
\begin{align*}
\P_{\Data^{t}_{k}(\param)}(B_{\param,\param^{\prime}}) - \P_{\Data^{t}_{k}(\param^{\prime})}(B_{\param,\param^{\prime}}) 
& \leq \| \P_{\Data^{t}_{k}(\param)} - \P_{\Data^{t}_{k}(\param^{\prime})}\| 
\\ & \leq \P_{\Data^{t}_{k}(\param)}(B_{\param,\param^{\prime}}) - \P_{\Data^{t}_{k}(\param^{\prime})}(B_{\param,\param^{\prime}}) + \gamma.
\end{align*}
By Carath\'{e}odory's extension theorem (specifically, the version presented by \cite{schervish:95}), 
there exists a disjoint sequence of sets $\{B_i(\param,\param^{\prime})\}_{i=1}^{\infty}$ such that 
\begin{equation*}
\P_{\Data^{t}_{k}(\param)}(B_{\param,\param^{\prime}}) - \P_{\Data^{t}_{k}(\param^{\prime})}(B_{\param,\param^{\prime}}) 
< \gamma + \sum_{i=1}^{\infty} \P_{\Data^{t}_{k}(\param)}(B_i(\param,\param^{\prime})) - \sum_{i=1}^{\infty} \P_{\Data^{t}_{k}(\param^{\prime})}(B_i(\param,\param^{\prime})),
\end{equation*}
and such that each $B_i(\param,\param^{\prime})$ is representable as follows;
for some $\ell_i(\param,\param^{\prime}) \in \nats$, and sets $C_{i j} = (A_{i j 1} \times (-\infty,t_{i j 1}])\times \cdots \times (A_{i j k} \times (-\infty,t_{i j k}])$, for $j \leq \ell_i(\param,\param^{\prime})$,
where each $A_{i j p} \in \BorelX$,
the set $B_i(\param,\param^{\prime})$ is representable as $\bigcup_{s \in S_i} \bigcap_{j=1}^{\ell_i(\param,\param^{\prime})} D_{i j s}$,
where $S_i \subseteq \{0,\ldots,2^{\ell_i(\param,\param^{\prime})}-1\}$, each $D_{i j s} \in \{C_{i j}, C_{i j}^{c}\}$, 
and $s \neq s^{\prime} \Rightarrow \bigcap_{j=1}^{\ell_i(\param,\param^{\prime})} D_{i j s} \cap \bigcap_{j=1}^{\ell_i(\param,\param^{\prime})} D_{i j s^{\prime}} = \emptyset$.
Since the $B_i(\param,\param^{\prime})$ are disjoint, the above sums are bounded, so that there exists
$m_{k}(\param,\param^{\prime},\gamma) \in \nats$ such that every $m \geq m_{k}(\param,\param^{\prime},\gamma)$ has
\begin{equation*}
\P_{\!\Data^{t}_{k}(\param)}(B_{\param,\param^{\prime}}) - \P_{\!\Data^{t}_{k}(\param^{\prime})}(B_{\param,\param^{\prime}}) 
< 2\gamma + \sum_{i=1}^{m} \P_{\!\Data^{t}_{k}(\param)}(B_i(\param,\param^{\prime})) - \sum_{i=1}^{m} \P_{\!\Data^{t}_{k}(\param^{\prime})}(B_i(\param,\param^{\prime})),
\end{equation*}
Now define $\tilde{M}_{k}(\gamma) = \max_{\param,\param^{\prime} \in \Params_{\gamma}} m_{k}(\param,\param^{\prime},\gamma)$.
Then for any $\param,\param^{\prime} \in \Params$, let $\param_{\gamma},\param_{\gamma}^{\prime} \in \Params_{\gamma}$ be 
such that $\|\prior_{\param} - \prior_{\param_{\gamma}}\| < \gamma$ and $\|\prior_{\param^{\prime}}-\prior_{\param^{\prime}_{\gamma}}\| < \gamma$,
which implies $\|\P_{\Data^{t}_{k}(\param)} - \P_{\Data^{t}_{k}(\param_{\gamma})}\| < \gamma$ and $\|\P_{\Data^{t}_{k}(\param^{\prime})} - \P_{\Data^{t}_{k}(\param^{\prime}_{\gamma})}\| < \gamma$
by Lemma~\ref{lem:infty-to-k}.
Then
\begin{align*}
\|\P_{\Data^{t}_{k}(\param)} - \P_{\Data^{t}_{k}(\param^{\prime})}\| 
& < \|\P_{\Data^{t}_{k}(\param_{\gamma})} - \P_{\Data^{t}_{k}(\param^{\prime}_{\gamma})}\| + 2\gamma
\\ & \leq \P_{\Data^{t}_{k}(\param_{\gamma})}(B_{\param_{\gamma},\param^{\prime}_{\gamma}}) - \P_{\Data^{t}_{k}(\param_{\gamma}^{\prime})}(B_{\param_{\gamma},\param^{\prime}_{\gamma}}) + 3\gamma
\\ & \leq \sum_{i=1}^{\tilde{M}_{k}(\gamma)} \P_{\Data^{t}_{k}(\param_{\gamma})}(B_i(\param_{\gamma},\param^{\prime}_{\gamma})) - \P_{\Data^{t}_{k}(\param^{\prime}_{\gamma})}(B_i(\param_{\gamma},\param^{\prime}_{\gamma})) + 5\gamma.
\end{align*}
Again, since the $B_i(\param_{\gamma},\param^{\prime}_{\gamma})$ are disjoint, this equals
\begin{align*}
& 5\gamma + \P_{\Data^{t}_{k}(\param_{\gamma})}\left( \bigcup_{i=1}^{\tilde{M}_{k}(\gamma)} B_i(\param_{\gamma},\param^{\prime}_{\gamma}) \right) 
- \P_{\Data^{t}_{k}(\param^{\prime}_{\gamma})}\left( \bigcup_{i=1}^{\tilde{M}_{k}(\gamma)} B_i(\param_{\gamma},\param^{\prime}_{\gamma}) \right)
\\ & \leq 7\gamma + \P_{\Data^{t}_{k}(\param)}\left( \bigcup_{i=1}^{\tilde{M}_{k}(\gamma)} B_i(\param_{\gamma},\param^{\prime}_{\gamma}) \right) 
- \P_{\Data^{t}_{k}(\param^{\prime})}\left( \bigcup_{i=1}^{\tilde{M}_{k}(\gamma)} B_i(\param_{\gamma},\param^{\prime}_{\gamma}) \right)
\\ & = 7\gamma + \sum_{i=1}^{\tilde{M}_{k}(\gamma)} \P_{\Data^{t}_{k}(\param)}(B_i(\param_{\gamma},\param^{\prime}_{\gamma})) - \P_{\Data^{t}_{k}(\param^{\prime})}(B_i(\param_{\gamma},\param^{\prime}_{\gamma}))
\\ & \leq 7 \gamma + 
\tilde{M}_{k}(\gamma) \max_{i \leq \tilde{M}_{k}(\gamma)} \left| \P_{\Data^{t}_{k}(\param)}(B_i(\param_{\gamma},\param^{\prime}_{\gamma})) - \P_{\Data^{t}_{k}(\param^{\prime})}(B_i(\param_{\gamma},\param^{\prime}_{\gamma})) \right|.
\end{align*}
Thus, if we can show that each term $\left| \P_{\Data^{t}_{k}(\param)}(B_i(\param_{\gamma},\param^{\prime}_{\gamma})) - \P_{\Data^{t}_{k}(\param^{\prime})}(B_i(\param_{\gamma},\param^{\prime}_{\gamma})) \right|$
is bounded by a $o(1)$ function of $\| \P_{\Data^{t}_{\vc}(\param)} - \P_{\Data^{t}_{\vc}(\param^{\prime})}\|$, then the result will follow by substituting this relaxation into the above expression
and defining $M_{k}$ by minimizing the resulting expression over $\gamma > 0$.

Toward this end, let $C_{ij}$ be as above from the definition of $B_i(\param_{\gamma},\param^{\prime}_{\gamma})$,
and note that $I_{B_i(\param_{\gamma},\param^{\prime}_{\gamma})}$ is representable as a function of the $I_{C_{ij}}$ indicators,
so that 
\begin{align*}
&\left| \P_{\!\Data^{t}_{k}(\param)}(B_i(\param_{\gamma},\param^{\prime}_{\gamma})) \!-\! \P_{\!\Data^{t}_{k}(\param^{\prime})}(B_i(\param_{\gamma},\param^{\prime}_{\gamma})) \right|
= \| \P_{\!I_{B_i(\param_{\gamma},\param^{\prime}_{\gamma})}(\Data^{t}_{k}(\param))} \!-\! \P_{\!I_{B_i(\param_{\gamma},\param^{\prime}_{\gamma})}(\Data^{t}_{k}(\param^{\prime}))}\|
\\ & \leq \| \P_{ (I_{C_{i1}}(\Data^{t}_{k}(\param)),\ldots,I_{C_{i \ell_{i}(\param_{\gamma},\param^{\prime}_{\gamma})}}(\Data^{t}_{k}(\param)))} 
- \P_{ (I_{C_{i1}}(\Data^{t}_{k}(\param^{\prime})),\ldots,I_{C_{i \ell_{i}(\param_{\gamma},\param^{\prime}_{\gamma})}}(\Data^{t}_{k}(\param^{\prime})))} \|
\\ & \leq 2^{\ell_{i}(\param_{\gamma},\param^{\prime}_{\gamma})} \max_{J \subseteq \{1,\ldots,\ell_{i}(\param_{\gamma},\param^{\prime}_{\gamma})\}} 
\E\Bigg[ \Bigg( \prod_{j \in J} I_{C_{ij}}(\Data^{t}_{k}(\param))\Bigg) \prod_{j \notin J} \Bigg(1 - I_{C_{ij}}(\Data^{t}_{k}(\param))\Bigg) 
\\ & {\hskip 4.6cm}- \Bigg( \prod_{j \in J} I_{C_{ij}}(\Data^{t}_{k}(\param^{\prime}))\Bigg) \prod_{j \notin J} \Bigg(1 - I_{C_{ij}}(\Data^{t}_{k}(\param^{\prime}))\Bigg) \Bigg]
\\ & \leq 2^{\ell_{i}(\param_{\gamma},\param^{\prime}_{\gamma})} \sum_{J \subseteq \left\{1,\ldots,2^{\ell_{i}(\param_{\gamma},\param^{\prime}_{\gamma})}\right\}} 
\left|\E\left[ \prod_{j \in J} I_{C_{ij}}(\Data^{t}_{k}(\param)) - \prod_{j \in J} I_{C_{ij}}(\Data^{t}_{k}(\param^{\prime}))\right] \right|
\\ & \leq 4^{\ell_{i}(\param_{\gamma},\param^{\prime}_{\gamma})} \max_{J \subseteq \left\{1,\ldots,2^{\ell_{i}(\param_{\gamma},\param^{\prime}_{\gamma})}\right\}} 
\left|\E\left[ \prod_{j \in J} I_{C_{ij}}(\Data^{t}_{k}(\param)) - \prod_{j \in J} I_{C_{ij}}(\Data^{t}_{k}(\param^{\prime}))\right] \right|
\\ & = 4^{\ell_{i}(\param_{\gamma},\param^{\prime}_{\gamma})} \max_{J \subseteq \left\{1,\ldots,2^{\ell_{i}(\param_{\gamma},\param^{\prime}_{\gamma})}\right\}} 
\left|\P_{\Data^{t}_{k}(\param)}\left( \bigcap_{j \in J} C_{ij}\right) - \P_{\Data^{t}_{k}(\param^{\prime})}\left( \bigcap_{j \in J} C_{ij} \right) \right|.
\end{align*}
Note that $\bigcap_{j \in J} C_{ij}$ can be expressed as some 
$(A_{1} \!\times (\!-\infty, t_{1}]) \times \cdots \times (A_{k} \!\times (\!-\infty,t_{k}])$, where each $A_p \in \BorelX$ and $t_p \in \reals$,
so that,
for $\hat{\ell} = \max_{\param,\param^{\prime} \in \Params_{\gamma}} \max_{i \leq \tilde{M}_{k}(\gamma)} \ell_i(\param,\param^{\prime})$
and $\mathcal{C}_{k} = \{ (A_1 \times (-\infty,t_1]) \times \cdots \times (A_k \times (-\infty,t_k]) : \forall j \leq k, A_j \in \BorelX, t_k \in \reals\}$,
this last expression is at most
\[
4^{\hat{\ell}} \sup_{C \in \mathcal{C}_{k}} \left|\P_{\Data^{t}_{k}(\param)}(C) - \P_{\Data^{t}_{k}(\param^{\prime})}(C) \right|.
\]
Next note that for any $C = (A_1 \times (-\infty,t_1]) \times \cdots \times (A_k \times (-\infty,t_k]) \in \mathcal{C}_{k}$, 
letting $C_1 = A_1 \times \cdots \times A_k$ and $C_2 = (-\infty,t_1] \times \cdots \times (-\infty,t_k]$,
\begin{align*}
\P_{\Data^{t}_{k}(\param)}(C) - \P_{\Data^{t}_{k}(\param^{\prime})}(C)
& = \E\left[ \left(\P_{\DataY_{t k}(\param) | \DataX_{t k}}(C_2) - \P_{\DataY_{t k}(\param^{\prime}) | \DataX_{t k}}(C_2)\right) I_{C_1}(\DataX_{t k}) \right]
\\ & \leq \E\left[ \left|\P_{\DataY_{t k}(\param) | \DataX_{t k}}(C_2) - \P_{\DataY_{t k}(\param^{\prime}) | \DataX_{t k}}(C_2) \right|\right]. 
\end{align*}
For $p \in \{1,\ldots,k\}$, let $C_{2 p} = (-\infty, t_p]$.
Then note that, by definition of $\vc$, for any given $x = (x_1,\ldots,x_k)$, 
the class $\H_{x} = \{ x_p \mapsto I_{C_{2 p}}(h(x_p)) : h \in \F\}$ is a VC class over $\{x_1,\ldots,x_k\}$ with VC dimension at most $\vc$.
Furthremore, we have
\begin{multline*}
\left|\P_{\DataY_{t k}(\param) | \DataX_{t k}}(C_2) - \P_{\DataY_{t k}(\param^{\prime}) | \DataX_{t k}}(C_2) \right|
\\ = \Big| \P_{ ( I_{C_{2 1}}(\target_{t \param}(X_{t 1})),\ldots,I_{C_{2 k}}(\target_{t \param}(X_{t k}))) | \DataX_{t k}}( \{(1,\ldots,1)\} ) 
\\ -\P_{ ( I_{C_{2 1}}(\target_{t \param^{\prime}}(X_{t 1})),\ldots,I_{C_{2 k}}(\target_{t \param^{\prime}}(X_{t k}))) | \DataX_{t k}}( \{(1,\ldots,1)\}) \Big|.
\end{multline*}
Therefore, the results of \cite{yang:13} (in the proof of their Lemma 3) imply
that 
\begin{align*}
&\left|\P_{\DataY_{t k}(\param) | \DataX_{t k}}(C_2) - \P_{\DataY_{t k}(\param^{\prime}) | \DataX_{t k}}(C_2) \right|
\\ & \leq 2^{k} \max_{y \in \{0,1\}^{\vc}} \max_{D \in \{1,\ldots,k\}^{\vc}} 
\Big| \P_{ \{I_{C_{2 j}}(\target_{t \param}(X_{t j}))\}_{j \in D} | \{X_{t j}\}_{j \in D}}( \{y\} ) 
\\ & {\hskip 5cm}- \P_{ \{I_{C_{2 j}}(\target_{t \param^{\prime}}(X_{t j}))\}_{j \in D} | \{X_{t j}\}_{j \in D}}( \{y\} ) \Big|.
\end{align*}
Thus, we have
\begin{align*}
& \E\left[ \left|\P_{\DataY_{t k}(\param) | \DataX_{t k}}(C_2) - \P_{\DataY_{t k}(\param^{\prime}) | \DataX_{t k}}(C_2) \right|\right]
\\ & \leq 2^{k} \E\Bigg[ \max_{y \in \{0,1\}^{\vc}} \max_{D \in \{1,\ldots,k\}^{\vc}} 
\Big| \P_{ \{I_{C_{2 j}}(\target_{t \param}(X_{t j}))\}_{j \in D} | \{X_{t j}\}_{j \in D}}( \{y\} ) 
\\ & {\hskip 5cm} - \P_{ \{I_{C_{2 j}}(\target_{t \param^{\prime}}(X_{t j}))\}_{j \in D} | \{X_{t j}\}_{j \in D}}( \{y\} ) \Big|\Bigg]
\\ & 
\leq 2^{k} \sum_{y \in \{0,1\}^{\vc}} \sum_{D \in \{1,\ldots,k\}^{\vc}} 
\E\Bigg[
\Big| \P_{ \{I_{C_{2 j}}(\target_{t \param}(X_{t j}))\}_{j \in D} | \{X_{t j}\}_{j \in D}}( \{y\} ) 
\\ & {\hskip 5cm}- \P_{ \{I_{C_{2 j}}(\target_{t \param^{\prime}}(X_{t j}))\}_{j \in D} | \{X_{t j}\}_{j \in D}}( \{y\} ) \Big|\Bigg]
\\ & \leq 2^{\vc+k} k^{\vc} \max_{y \in \{0,1\}^{\vc}} \max_{D \in \{1,\ldots,k\}^{\vc}} \E\Bigg[
\Big| \P_{ \{I_{C_{2 j}}(\target_{t \param}(X_{t j}))\}_{j \in D} | \{X_{t j}\}_{j \in D}}( \{y\} ) 
\\ & {\hskip 5cm}- \P_{ \{I_{C_{2 j}}(\target_{t \param^{\prime}}(X_{t j}))\}_{j \in D} | \{X_{t j}\}_{j \in D}}( \{y\} ) \Big|\Bigg].
\end{align*}
Exchangeability implies this is at most
\begin{align*}
& 2^{\vc+k} k^{\vc} \max_{y \in \{0,1\}^{\vc}} \sup_{t_1,\ldots,t_{\vc} \in \reals} \E\Bigg[
\Big| \P_{ \{I_{(-\infty,t_j]}(\target_{t \param}(X_{t j}))\}_{j=1}^{\vc} | \DataX_{t \vc}}( \{y\} ) 
\\ & {\hskip 5.6cm}- \P_{ \{I_{(-\infty,t_j]}(\target_{t \param^{\prime}}(X_{t j}))\}_{j=1}^{\vc}  | \DataX_{t \vc}}( \{y\} ) \Big|\Bigg]
\\ & = 2^{\vc+k} k^{\vc} \max_{y \in \{0,1\}^{\vc}} \sup_{t_1,\ldots,t_{\vc} \in \reals} \E\Bigg[
\Big| \P_{ \{I_{(-\infty,t_j]}(Y_{t j}(\param))\}_{j=1}^{\vc} | \DataX_{t \vc}}( \{y\} ) 
\\ & {\hskip 5.6cm}- \P_{ \{I_{(-\infty,t_j]}(Y_{t j}(\param^{\prime}))\}_{j=1}^{\vc}  | \DataX_{t \vc}}( \{y\} ) \Big|\Bigg].
\end{align*}
\cite{yang:13} argue that for all $y \in \{0,1\}^{\vc}$ and $t_1,\ldots,t_{\vc} \in \reals$, 
\begin{align*}
& \E\Big[\Big| \P_{ \{I_{(-\infty,t_j]}(Y_{t j}(\param))\}_{j=1}^{\vc} | \DataX_{t \vc}}( \{y\} ) 
- \P_{ \{I_{(-\infty,t_j]}(Y_{t j}(\param^{\prime}))\}_{j=1}^{\vc}  | \DataX_{t \vc}}( \{y\} ) \Big|\Big]
\\ & \leq 4 \sqrt{\| \P_{ \{I_{(-\infty,t_j]}(Y_{t j}(\param))\}_{j=1}^{\vc}, \DataX_{t \vc}} - \P_{ \{I_{(-\infty,t_j]}(Y_{t j}(\param^{\prime}))\}_{j=1}^{\vc}, \DataX_{t \vc}} \|}.
\end{align*}
Noting that 
\begin{equation*}
\| \P_{ \{I_{(-\infty,t_j]}(Y_{t j}(\param))\}_{j=1}^{\vc}, \DataX_{t \vc}} - \P_{ \{I_{(-\infty,t_j]}(Y_{t j}(\param^{\prime}))\}_{j=1}^{\vc}, \DataX_{t \vc}} \|
\leq \| \P_{\Data^{t}_{\vc}(\param)} - \P_{\Data^{t}_{\vc}(\param^{\prime})} \|
\end{equation*}
completes the proof.
\qed
\end{proof}

We are now ready for the proof of Theorem~\ref{thm:consistency}.

\begin{proof}[Proof of Theorem~\ref{thm:consistency}]
The estimator $\hat{\param}_{T\TruParam}$ we will use is precisely the minimum-distance skeleton estimate of $\P_{\Data^{t}_{\vc}(\TruParam)}$ \cite{yatracos:85,devroye:01}.
\cite{yatracos:85} proved that if $N(\eps)$ is the $\eps$-covering number of $\{\P_{\Data^{t}_{\vc}(\TruParam)} : \param \in \Params\}$,
then taking this $\hat{\param}_{T\TruParam}$ estimator, then for some $T_{\eps} = O((1/\eps^{2})\log N(\eps/4))$, any $T \geq T_{\eps}$
has 
\[
\E\left[ \| \P_{\Data^{t}_{\vc}(\hat{\param}_{T\TruParam})} - \P_{\Data^{t}_{\vc}(\TruParam)} \| \right] < \eps.
\]
Thus, taking $G_{T} = \inf\{ \eps > 0 : T \geq T_{\eps} \}$, we have
\[
\E\left[ \| \P_{\Data^{t}_{\vc}(\hat{\param}_{T\TruParam})} - \P_{\Data^{t}_{\vc}(\TruParam)} \| \right] \leq G_{T} = o(1).
\]
Letting $R^{\prime}(T,\alpha)$ be any positive sequence with $G_{T} \ll R^{\prime}(T,\alpha) \ll 1$ and $R^{\prime}(T,\alpha) \geq G_{T} / \alpha$, 
and letting $\delta(T,\alpha) = G_{T} / R^{\prime}(T,\alpha) = o(1)$,
Markov's inequality implies
\begin{equation}
\label{eqn:d-dim-consistency}
\P\left( \| \P_{\Data^{t}_{\vc}(\hat{\param}_{T\TruParam})} - \P_{\Data^{t}_{\vc}(\TruParam)} \| > R^{\prime}(T,\alpha) \right) \leq \delta(T,\alpha) \leq \alpha.
\end{equation}
Letting $R(T,\alpha) = \min_{k} \left(M_{k}\left( R^{\prime}(T,\alpha) \right) + r_{k}\right)$,
since $R^{\prime}(T,\alpha) = o(1)$ and $r_{k} = o(1)$, we have $R(T,\alpha) = o(1)$.  Furthermore, 
composing \eqref{eqn:d-dim-consistency} with Lemmas~\ref{lem:prior-to-infty}, \ref{lem:infty-to-k}, and \ref{lem:k-to-d},
we have
\[
\P\left( \| \prior_{\hat{\param}_{T\TruParam}} - \prior_{\TruParam} \| > R(T,\alpha) \right) \leq \delta(T,\alpha) \leq \alpha.
\]
\qed
\end{proof}

\paragraph{Remark:} Although the above proof makes use of the minimum-distance skeleton estimator, 
which is typically not computationally efficient, it is often possible to achieve this same result (for certain families of distributions)
using a simpler estimator, such as the maximum likelihood estimator.  All we require is that the risk of the 
estimator converges to $0$ at a known rate that is independent of $\TruParam$.  For instance, see \cite{van-de-geer:00} for 
conditions on the family of distributions sufficient for this to be true of the maximum likelihood estimator.

\bibliographystyle{splncs03}
\bibliography{bib_atl_prior_convergence}

\end{document}